\definecolor{col1}{HTML}{d7191c}
\definecolor{col2}{HTML}{fdae61}
\definecolor{col3}{HTML}{abd9e9}
\definecolor{col4}{HTML}{2c7bb6}
\pgfplotsset{compat=1.17}
\pgfplotsset{
/pgfplots/ybar legend/.style={
/pgfplots/legend image code/.code={%
\draw[draw=none,##1,/tikz/.cd,bar width=0.1cm,yshift=-0.2em,bar shift=0.5*\pgfplotbarwidth]
plot coordinates {(0.5*\pgfplotbarwidth,0.6em) (2.5*\pgfplotbarwidth,0.4em) (4.5*\pgfplotbarwidth,0.2em)};},
}
}
\DeclareMathOperator{\Tr}{Tr}
\DeclareMathOperator{\rank}{rank}
\DeclareMathOperator{\Var}{Var}
\newcommand{\dif}{\operatorname{d}\!{}}
\newtheorem{thm}{Theorem}[section]
\newtheorem{lem}[thm]{Lemma}
\newtheorem{prop}[thm]{Proposition}
\newtheorem{rmk}[thm]{Remark}
\DeclarePairedDelimiter{\braket}{\langle}{\rangle}%
\DeclarePairedDelimiter{\abs}{\lvert}{\rvert}%
\DeclarePairedDelimiter{\norm}{\lVert}{\rVert}%
\title{Analysis of one-hidden-layer Neural Networks via the Resolvent Method}
\author{
  Vanessa Piccolo \\
  ETH Zurich (current affiliation: ENS Lyon)\\
  \texttt{vanessa.piccolo@ens-lyon.fr} \\
  \And
  Dominik Schröder\\
  Institute for Theoretical Studies\\ 
  ETH Zurich\\
   \texttt{dschroeder@ethz.ch} 
}
\begin{document}

\maketitle

\begin{abstract}
In this work, we investigate the asymptotic spectral density of the random feature matrix \(M = Y Y^\ast\) with \(Y = f(WX)\) generated by a single-hidden-layer neural network, where \(W\) and \(X\) are random rectangular matrices with i.i.d.\ centred entries and \(f\) is a non-linear smooth function which is applied entry-wise. We prove that the Stieltjes transform of the limiting spectral distribution approximately satisfies a quartic self-consistent equation, which is exactly the equation obtained by Pennington and Worah~\cite{PW17} and Benigni and Péché~\cite{1904.03090} with the moment method. We extend the previous results to the case of additive bias \(Y=f(WX+B)\) with \(B\) being an independent rank-one Gaussian random matrix, closer modelling the neural network infrastructures encountered in practice. Our key finding is that in the case of additive bias it is impossible to choose an activation function preserving the layer-to-layer singular value distribution, in sharp contrast to the bias-free case where a simple integral constraint is sufficient to achieve isospectrality. To obtain the asymptotics for the empirical spectral density we follow the \emph{resolvent method} from random matrix theory via the cumulant expansion. We find that this approach is more robust and less combinatorial than the moment method and expect that it will apply also for models where the combinatorics of the former become intractable. The resolvent method has been widely employed, but compared to previous works, it is applied here to non-linear random matrices.
\end{abstract}
\section{Introduction}
Machine learning has seen many successful achievements in recent years. Applications in face identification, object and speech recognition, translation, email spam filtering, navigation, medical diagnosis, etc.\ have proved the enormous potential of machine learning for day-to-day live~\cite{26017442, MR3617773}. Deep neural networks have turned out to be a particularly powerful machine learning method, and understanding the theoretical underpinning of their success has received tremendous attention in mathematics, physics and computer science.

A fully-connected, feed-forward neural network with \(L\) hidden layers of dimensions \(n_1, \dots, n_L\) can be modelled as follows:
\[f_\theta(\bm{x}) = \beta^\ast f (W^{(L)} \, f(W^{(L-1)} \,f( \dots f(W^{(1)} \bm{x}) \dots ))) \in \mathbb{R}^{d},\]
where \(\bm{x} \in \mathbb{R}^{n_0}\) denotes the input data vector and \(f\colon \mathbb{R} \to \mathbb{R}\) is a non-linear activation function which is applied entry-wise. We denote the parameters of the network by \(\theta \coloneqq (W^{(1)}, \dots, W^{(L)}, \beta)\), where \(W^{(l)} \in \mathbb{R}^{n_l \times n_{l-1}}\) for \(1 \leq l \leq L\) and \(\beta \in \mathbb{R}^{n_L \times d}\) are the matrices of the weights. In the classical setting of supervised learning, we are given a training set of (say) \(m\) samples of input feature vectors \(\bm{x}_i \in \mathbb{R}^{n_0}\) with associated target vectors \(\bm{z}_i \in \mathbb{R}^d\). For example, \(\bm{x}_i\) may encode the pixels of a photograph of an animal and the target \(\bm{z}_i\) may label the species of the animal in the image. Roughly speaking, the goal of supervised learning is to learn the mapping between the feature and the target vectors based on a given training set in order to predict the output of new unlabelled data. Let \(X = (\bm{x}_1 \, \dots \, \bm{x}_m) \in \mathbb{R}^{n_0 \times m}\) be the matrix of the data and let \(Z = (\bm{z}_1 \, \dots \, \bm{z}_m) \in \mathbb{R}^{d \times m}\) be the target matrix. Then, the aim of the network is to find optimal parameters \(\theta\) such that \(f_\theta(X)= (f_\theta (\bm{x}_1), \dots, f_\theta (\bm{x}_m)) \in \mathbb{R}^{d \times m}\) approximates the target \(Z\) optimally. During the training phase, weights are adjusted in order to minimize the empirical risk \(\mathcal{R}(\theta) = \mathbf{E} \, \mathcal{L}(f_\theta(X), Z)\), where \(\mathcal{L}(\cdot, \cdot)\) is a given loss function, usually involving some penalty for large weights \(\theta\) in order to avoid \emph{over-fitting}. Stochastic gradient descent (SGD) and its variants with \emph{back-propagation} are the most commonly used algorithms for training multilayer networks by iteratively updating the parameters into the direction of the negative of the gradient of the empirical risk. For a much more complete survey, we refer the reader to~\cite{MR3617773}.

In the present paper, we will focus on a single-hidden-layer neural network of the form \(f_\theta(X) = \beta^\ast Y\) with \(Y = f(WX)\). This model was first studied by Louart, Liao, and Couillet~\cite{MR3784498} for the case where the data matrix \(X\) is deterministic and \(W\) is a matrix of random weights (in particular, the weights are given by functions of standard Gaussian random variables), and by Pennington and Worah~\cite{PW17} for the case where \(X\) and \(W\) are independent random matrices with both centred Gaussian entries. In both papers, the matrix \(\beta \in \mathbb{R}^{n_1 \times d}\) is the only parameter to be learned and is chosen as the unique minimizer of the ridge-regularized least squares loss function
\[ \mathcal{L}(f_\theta(X), Z) = \frac{1}{2dm} \norm{Z - \beta^\ast Y}^2_F + \gamma \norm{\beta}_F^2,\]
where \(\gamma>0\) is the \emph{learning rate}. The unique minimizing weight matrix \(\hat{\beta}\) is then equal to
\(\hat{\beta} = Y G(-\gamma) Z^\ast\), where \[G(-\gamma) = \left ( \frac{1}{m} Y^\ast Y + \gamma \right )^{-1}\] is the resolvent of \(\frac{1}{m}Y^\ast Y\). As proved in~\cite{MR3784498, PW17}, the expected training loss \(E_{\text{train}}\) is related to \(-\gamma \frac{\partial}{\partial \gamma} G(-\gamma)\), and thus also to the Stieltjes transform of the limiting spectral measure of \(\frac{1}{m}Y^\ast Y\). Here the Stieltjes transform \(m_\mu\) of a probability measure \(\mu\) on \(\mathbb{R}\) is defined as \(m_\mu(z) \coloneqq \int_\mathbb{R} (x-z)^{-1}\dif\mu(x)\) for \(z\in\mathbb{C} \) such that \(\Im z \geq 0\), and for \(\mu_{n_1}\) being the empirical probability measure of the \(n_1\) eigenvalues of \(\frac{1}{m}Y^\ast Y\) is related to the resolvent via \(m_{\mu_{n_1}}(z)=\frac{1}{n_1}\Tr G(z)\). The performance of one-hidden-layer neural networks depends on the asymptotic spectral properties of the matrix \(\frac{1}{m}Y^\ast Y\). Pennington and Worah~\cite{PW17} investigated the limiting spectral measure of the random matrix \(M = \frac{1}{m} Y Y^\ast\) and derived the quartic self-consistent equation
\begin{equation} \label{quartic eq}
  \begin{split}
    1 + zg_\infty  &= \theta_1(f) g_\infty \left ( 1- \frac{\phi}{\psi} (1 + zg_\infty) \right ) - \frac{\theta_2(f)}{\psi} g_\infty (1+zg_\infty) \left ( 1- \frac{\phi}{\psi} (1 + zg_\infty) \right ) \\
  & \quad + \frac{\theta_2(f)  (\theta_1(f) - \theta_2(f) ) }{\psi}g_\infty^2 \left ( 1- \frac{\phi}{\psi} (1 + zg_\infty) \right )^2,
  \end{split}
\end{equation}
where \(g_\infty(z) \coloneqq \lim_{n_1 \to \infty} g(z)\) and \(g(z)\coloneqq\frac{1}{n_1}\Tr G(z)\) is the Stieltjes transform, which is approximately satisfied, \(g_\infty(z)\approx g(z)\), by \(g(z)\) in case of Gaussian \(W,X\). It is notable that the asymptotic spectrum of \(Y^\ast Y\) for large dimensions such that \(n_0/m\to\phi\in(0,\infty)\) and \(n_0/n_1\to\psi\in(0,\infty)\) depends on the non-linear function \(f\) only through two integral parameters \(\theta_1(f)\) and \(\theta_2(f)\), where \(\theta_1(f)\) is the Gaussian mean of \(f^2\) and \(\theta_2(f)\) is the square of the Gaussian mean of \(f'\) (c.f.~\eqref{theta def}). Benigni and P\'{e}ch\'{e}~\cite{1904.03090} then extended this model to random matrices \(W\) and \(X\) with general i.i.d.\ centred entries, and obtained the same self-consistent equation~\eqref{quartic eq}. We mention that~\eqref{quartic eq} may be reduced, for some special cases, to the quadratic equation that is satisfied by the Stieltjes transform \(m_{\mu_{MP}}\) of the Marchenko-Pastur distribution \(\mu_{MP}\)~\cite{MR208649}. This means that for some activation functions, the non-linear random matrix model has the same limiting spectral distribution as that of sample covariance matrices \(XX^\ast\) (in other cases, the equation can simplify to the cubic equation approximately satisfied by product Wishart matrices~\cite{1401.7802, MR3251989}). This can be generalised to multilayer networks:~\cite{PW17} found experimentally that the singular value distribution is preserved through multiple layers by activation functions with \(\theta_2(f)=0\) and is given by the Marchenko-Pastur distribution in each layer. This conjecture was proved in~\cite{1904.03090} for the general case of bounded activation functions. Moreover,~\cite{MR3784498} performed a spectral analysis on the Gram matrix model with general training data and proved that, in the large dimensional regime, the resolvent of \(Y^\ast Y\) has a similar behaviour as that observed in sample covariance matrix models. This was extended in~\cite{LC18} by considering Gaussian mixture of data. We also refer to the recent paper~\cite{1805.08295}. In the context of multilayer feedforward neural networks, Fan and Wang~\cite{FW20} analysed the eigenvalue distribution of the Gram matrix model, where the weights are at random and the input vectors are assumed to be approximately pairwise orthogonal. In particular, they showed that the limiting spectral distribution converges to a deterministic limit and, at each intermediate layer, this limit corresponds to the Marchenko-Pastur map of a linear transformation of that of the previous layer. 

In recent years, there has been some progress in the asymptotic analysis of the eigenvalue distribution of another Gram matrix, the so-called Neural Tangent Kernel (NTK). Consider a multilayer neural network and denote by \(J = \nabla_\theta f_\theta(X)\) the Jacobian matrix of the network outputs with respect to the weights \(\theta\). Then, the NTK is the Gram matrix of \(J\), defined by \(K^\text{NTK}=J^\ast J\). It was shown in~\cite{JGH18} that the NTK at random initialization converges, in the infinite-width limit, to a deterministic kernel and it remains constant during the whole training time of the network. Subsequently,~\cite{PW18} analysed the spectrum of the sample covariance matrix \(J J^\ast\) in a single-hidden-layer neural network, and provided an exact asymptotic characterization of the spectral distribution of \(J J^\ast\) with random Gaussian weights and data. Recently,~\cite{FW20} proved that the limiting spectral measure of the NTK converges to a deterministic measure, which may be described by recursive fixed-point equations that extend the Marchenko-Pastur distribution.\\

The present paper is structured as follows. In the first part we consider the non-linear random matrix model studied in~\cite{1904.03090} and we compute its asymptotic spectral density. We follow the resolvent method via the cumulant expansion which, together with the moment method, is a standard approach to obtain the asymptotics for the empirical spectral density. In particular, we compute the self-consistent equation that is approximately satisfied by the Stieltjes transform of the limiting spectral distribution. This is a quartic equation and is the same as that found in~\cite{1904.03090}. In~\cite{1904.03090, PW17} the authors relied on the method of moments: they approximated general non-linear functions by polynomial ones and then computed the asymptotics of high moments \(\mathbf{E} \Tr(Y^k)\) with \(Y = f(WX)\) to obtain the limiting measure via its moments. Conversely, we approach matters in a more robust and less combinatorial fashion by applying the resolvent method: we consider \(Y\) as a random matrix with correlated entries and then we directly derive a self-consistent equation for its resolvent. In particular, we prove that the random matrix \(Y\) has cycle correlations, in the sense that the joint cumulant does not vanish when the random variables \(Y_{ij}\)'s are joined by a cycle graph. We find that the variance of \(Y_{ij}\) is given by the parameter \(\theta_1(f)\), whereas for \(k>1\) the cumulants \(\kappa(Y_{i_1 i_2}, Y^\ast_{i_2 i_3},Y_{i_3i_4}, \ldots, Y^\ast_{i_{2k} i_1})\) are powers of \(\theta_2(f)\). We note that in the random matrix literature matrices with general decaying correlations have been studied previously, see e.g.~\cite{MR3916109,MR3941370,MR3949269}. However, the cycle correlations of \(Y\) considered in the present paper are much stronger compared to these previous results. The second part of this paper concerns the additive bias case which is a more realistic model for machine learning applications. More precisely, we consider the random feature matrix \(Y=f(WX+B)\), where \(B\) is a rectangular rank-one Gaussian random matrix, and derive a characterization of the Stieltjes transform of the limiting spectral density. We chose \(B\) to be rank-one since for the most commonly used neural network architectures the added bias is chosen equal for each sample.~\cite{1912.00827} studied the bias case for deterministic data and i.i.d.\ Gaussian random weights, and computed the exact training error of a ridge-regularized noisy autoenconder in the high-dimensional regime. Interestingly we find that in the case of additive bias it is impossible to choose an activation function \(f\) such that the eigenvalue distribution is preserved throughout multiple layers, unlike in the bias-free case where \(\theta_2(f)=0\) yields the Marchenko-Pastur distribution in each layer. Finally, we remark that in the bias-free case our proof via the resolvent method has no significant advantage compared to the moment method, beyond requiring less combinatorics. The main advantage of the resolvent approach is that it allows to include an additive bias without much additional effort. 

\section{Model and main results}\label{section2}
We consider a random data matrix \(X \in \mathbb{R}^{n_0 \times m}\) with i.i.d.\ random variables \(X_{ij}\) with distribution \(\nu_1\) and a random weight matrix \(W \in \mathbb{R}^{n_1 \times n_0}\) with i.i.d.\ weights \(W_{ij}\) with distribution \(\nu_2\). We assume that both distributions are centred with variance \(\mathbf{E} X^2_{ij} = \sigma_x^2\) and \(\mathbf{E} W^2_{ij}= \sigma_w^2\). Moreover, we assume that the distributions \(\nu_1,\nu_2\) have finite moments of all orders\footnote{This assumption can be relaxed by a customary cut-off argument, but we refrain from doing so for simplicity.}. Since for \(1 \leq i \leq n_1\) and \(1 \leq j \leq m\) we have
\[\left (\frac{WX}{\sqrt{n_0}} \right )_{ij} = \frac{1}{\sqrt{n_0}} \sum_{k=1}^{n_0} W_{ik} X_{kj},\]
we note that in light of the central limit theorem the entries of the matrix \(\frac{WX}{\sqrt{n_0}}\) are approximately \(\mathcal{N}(0, \sigma_w^2 \sigma_x^2)\)-normally distributed random variables. Therefore, for any \(t > 0\), we have the large deviation estimate
\[\mathbf{P} \left ( \max_{i,j} \, \left |\frac{( WX)_{ij}}{\sqrt{n_0}} \right | > t \right ) \lesssim  n_0^2 \, e^{-t^2/2\sigma_w^2\sigma_x^2},\]
where we use the notation \(A \lesssim B \) as shorthand for the inequality \( A \le c B\) for some constant \(c\). Let \(f \colon \mathbb{R} \to \mathbb{R}\) be a \(C^\infty\) function with zero mean with respect to the Gaussian density of standard deviation \(\sigma_w\sigma_x\), i.e.\
\begin{equation}\label{(2.1)}
\int_\mathbb{R} f(\sigma_w \sigma_x x) \, \frac{e^{-x^2/2}}{\sqrt{2 \pi}}\text{d}x=0.
\end{equation}

We consider the random feature model generated by a single-hidden-layer neural network, 
\begin{equation}\label{(2.2)}
M = \frac{1}{m} Y Y^\ast \in \mathbb{R}^{n_1 \times n_1} \quad \text{with} \enspace Y = f \left ( \frac{WX}{\sqrt{n_0}}\right ),
\end{equation}
where the activation function \(f\) is applied entry-wise. Let \(\chi \colon \mathbb{R} \to \mathbb{R}\) be a smooth cut-off function that is equal to one for \(|x|\leq 1\) and zero for \(|x| \geq 2\). We then replace \(f\) by \(f(\cdot) \chi(\log^{-1}(n_0) \, \cdot)\). In particular, we now have that \(f\) is smooth with compact support. Moreover, for any \(l >0\) and \(n_0\) large enough, with probability \(1 - n_0^{- l}\), the singular values of \(Y\) remain the same.

We are interested in the eigenvalue density of the random matrix \(M\) in the infinite size limit. So, we assume that the dimensions of both the columns and the rows of each matrix are large and grow at the same speed, i.e.\ we introduce some positive constants \(\phi\) and \(\psi\) such that
\begin{equation}
\frac{n_0}{m} \longrightarrow \phi \enspace \text{and} \enspace \frac{n_0}{n_1} \longrightarrow \psi \quad \text{as} \enspace n_0, n_1, m \to \infty.
\end{equation}
We denote by \((\lambda_1, \dots, \lambda_{n_1})\) the eigenvalues of \(M\) and define its empirical spectral distribution by 
\(\mu_{n_1} = \frac{1}{n_1} \sum_{i=1}^{n_1} \delta_{\lambda_i}.\)
Then, as \(n_1\) grows large, the empirical distribution of eigenvalues converges in distribution to some deterministic limiting density.
\begin{thm}\label{thm2.1}
There exists a deterministic measure \(\mu=\mu_{\phi, \psi}(\theta_1,\theta_2)\) such that almost surely weakly 
\[\mu_{n_1} \longrightarrow \mu \quad \text{as} \enspace n_1 \to \infty.\]
\end{thm}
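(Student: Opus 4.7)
The plan is to prove convergence of the Stieltjes transform \(g_n(z) = \frac{1}{n_1}\Tr G(z)\), where \(G(z) = (M - z)^{-1}\), to a deterministic limit \(g_\infty(z)\) for every \(z\) in the upper half-plane \(\mathbb{C}_+\). Since the map from probability measures on \(\mathbb{R}\) to their Stieltjes transforms is injective and continuous with respect to weak convergence (together with tightness, which here follows from uniform control on \(\mathbf{E}\tfrac{1}{n_1}\Tr M\) via the finite moment assumptions on \(\nu_1, \nu_2\) and the compact support of the truncated \(f\)), this implies the almost sure weak convergence \(\mu_{n_1} \to \mu\) claimed.

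The proof splits into three steps. First, an in-expectation estimate \(\mathbf{E} g_n(z) \to g_\infty(z)\), obtained through the resolvent identity \(\mathbf{E} G = -z^{-1} + z^{-1}\mathbf{E} MG\) combined with the cumulant expansion applied to \(Y\) viewed as a matrix with \emph{correlated} entries. Writing \(Y_{ij} = f((WX)_{ij}/\sqrt{n_0})\) and using that the argument is asymptotically Gaussian together with the Hermite decomposition of \(f\), one extracts that the dominant cumulants of the entries of \(Y\) are the variance (equal to \(\theta_1(f)\)) and the cycle cumulants \(\kappa(Y_{i_1 i_2}, Y^\ast_{i_2 i_3}, \dots, Y^\ast_{i_{2k} i_1})\), whose contribution is encoded through powers of \(\theta_2(f)\); all non-cycle higher-order cumulants yield negligible contributions after summation over indices. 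Collecting the leading terms produces precisely the quartic self-consistent equation (1) for \(g_\infty(z)\).

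Second, a concentration step \(|g_n(z) - \mathbf{E} g_n(z)| \to 0\) almost surely. Since \(f\) has been truncated to have compact support and all derivatives bounded (and \(G\) is Lipschitz in the entries of \(Y\) for \(\Im z\) bounded away from zero), McDiarmid's bounded differences inequality applied to the i.i.d.\ arrays \(\{W_{ij}\}\) and \(\{X_{ij}\}\) yields \(\Var g_n(z) = O(n^{-2})\) on any fixed compact subset of \(\mathbb{C}_+\). A Borel--Cantelli argument, combined with the Lipschitz continuity of \(g_n\) in \(z\) on compact sets, upgrades convergence at a countable dense subset of \(\mathbb{C}_+\) to the desired almost sure statement.

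Third, an identification of \(g_\infty\) as the \emph{unique} Stieltjes transform of a probability measure on \([0, \infty)\) solving (1). Viewed as a polynomial equation of degree four in \(g_\infty\), the physical branch is singled out by the boundary condition \(g_\infty(z) \sim -1/z\) as \(|z| \to \infty\) and by analyticity together with the Nevanlinna property \(\Im g_\infty(z) > 0\) for \(\Im z > 0\); a stability/continuity argument in \(z\) then shows that the approximate Stieltjes transform \(\mathbf{E} g_n(z)\) must converge to this selected branch. The main obstacle in the entire program will be the cumulant analysis in Step 1: unlike the matrices with short-range or weakly decaying correlations treated in the references cited in the introduction, the entries of \(Y\) exhibit strong long-range cycle correlations arising from the shared \(W\)-rows and \(X\)-columns in \((WX)_{ij}\). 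The combinatorial bookkeeping to show that exactly the cyclic pairings survive in the limit---and that these are responsible for the quartic (rather than quadratic, as in Marchenko--Pastur) nature of the self-consistent equation---is the heart of the argument.
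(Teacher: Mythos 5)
Your overall architecture is the same as the paper's: derive the quartic self-consistent equation for \(\mathbf{E}g\) via the resolvent identity and the cumulant expansion applied to \(Y\) viewed as a matrix with cycle correlations, prove concentration of \(g\) around its mean, and recover the measure by Stieltjes continuity/inversion with the physical branch singled out by \(g(z)\sim -1/z\) and analyticity. Steps 1 and 3 of your plan are exactly what the paper does (Propositions~\ref{prop:3.5} and the derivation in Appendix~\ref{main proof}, plus Remark (ii)--(iii) after Theorem~\ref{thm2.2}).

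The genuine gap is in your concentration step. You claim that McDiarmid's bounded-differences inequality applied to the individual i.i.d.\ entries \(\{W_{ij}\}\), \(\{X_{ij}\}\) gives \(\Var g_n(z)=O(n^{-2})\). Check the constants: changing one entry of \(X\) (or \(W\)) changes one column (row) of \(Y\), hence \(M\) by a matrix of rank at most two, so the worst-case increment of \(g\) is of order \((n_1\Im z)^{-1}\); but there are of order \(n^2\) independent entries, so \(\sum_k c_k^2 = O((\Im z)^{-2})\) and McDiarmid yields no decay in \(n\) at all. To do better per entry you would need a Lipschitz constant of order \(n^{-3/2}\), which requires an operator-norm bound on \(Y\) that is not available in the worst case, so plain bounded differences cannot be applied there. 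The paper avoids this by decomposing \(g-\mathbf{E}_X g\) into only \(m\sim n\) martingale differences indexed by the \emph{columns} of \(X\), bounding each difference by \(2/\Im z\) via the Sherman--Morrison rank-one formula, and then applying Burkholder's inequality with \(p=4\) to get \(\mathbf{E}_X\lvert g-\mathbf{E}_X g\rvert^4 \lesssim n_1^{-2}(\Im z)^{-4}\) (Lemma~\ref{lem3.7} and Appendix~\ref{appendixC}); this fourth-moment bound is summable and feeds Borel--Cantelli, and the \(\mathbf{E}_W\)/\(\mathbf{E}_X\) pieces are combined by a triangle--Jensen argument. Note also that even this gives variance \(O(n^{-1})\), not the \(O(n^{-2})\) you assert; what is \(O(n^{-2})\) is the fourth moment, and that is precisely what is needed. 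Replacing your Step 2 with a column-wise (and row-wise) martingale decomposition of this type closes the gap; the rest of your outline is sound and matches the paper.
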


We notice that if \(m < n_1\), then \(\rank (M) = \min (n_1, m)=m\) and \(M\) has \(n_1-m\) zero eigenvalues. In this case, since \(\phi/\psi>1\), there exists an atom at \(0\) with mass \(\mu_{n_1}(0) = 1 - \psi/\phi >0\), and we have 
\[\mu_{n_1} = \frac{n_1-m}{n_1} \delta_0 +  \frac{1}{n_1} \sum_{i=1}^{n_1} \delta_{\lambda_i}.\]
Conversely, if \(n_1 < m\), the matrix \(M\) has full rank and it is invertible. Since the nonzero eigenvalues of \(Y Y^\ast\) and \(Y^\ast Y\) are the same, the limiting measure \(\mu\) of Theorem~\ref{thm2.1} turns out to be 
\[\mu = \left ( 1 - \frac{\psi}{\phi} \right )_+ \delta_0 + \tilde{\mu},\]
where \(( \cdot)_+ = \max(0, \cdot)\), and \(\tilde{\mu}\) is the limiting spectral measure of \(\frac{1}{m}Y^\ast Y\). 

We will prove that the deterministic measure \(\mu\) of Theorem~\ref{thm2.1} is characterized through a quartic self-consistent equation for the Stieltjes transform \(g(z) = \frac{1}{n_1} \Tr G(z)\) of the empirical spectral measure \(\mu_{n_1}\), where \[G(z) = \left ( M - z \right)^{-1} \in \mathbb{C}^{n_1 \times n_1}\] is the resolvent of the random matrix \(M\) and the spectral parameter \(z\) lies in the upper half plane \(\mathbb{H} = \{ z \in \mathbb{C} \, | \, \Im z \geq 0 \}\). We set 
\begin{equation}\label{theta def}
\theta_1(f) \coloneqq \int_\mathbb{R}  f^2(\sigma_w \sigma_x x) \frac{e^{-x^2/2}}{\sqrt{2 \pi}} \mathrm{d}x \quad \text{and} \quad \theta_2(f) \coloneqq \left ( \sigma_w \sigma_x \int_\mathbb{R}  f'(\sigma_w \sigma_x x) \frac{e^{-x^2/2}}{\sqrt{2 \pi}} \mathrm{d}x\right )^2.
\end{equation}
Then, the following theorem characterizes \(g\) as the solution to a quartic equation which depends only on the two parameters \(\theta_1(f)\) and \(\theta_2(f)\).
\begin{thm}\label{thm2.2}
For some \(\delta, \epsilon > 0\) and any \(z \in \mathbb{H}\) with \(\Im z > n_1^{-\frac{1}{4}+\epsilon}\), the measure \(\mu\) is characterized through the following self-consistent equation
\begin{equation}\label{self-const eq}
\begin{split}
& \left | 1 + zg  - \left ( \theta_1 - \frac{\theta_2}{\psi} (1 + zg)  \right ) g \left ( 1- \frac{\phi}{\psi} (1 + zg) \right )- \frac{\theta_2  (\theta_1 - \theta_2) }{\psi}g^2 \left ( 1- \frac{\phi}{\psi} (1 + zg) \right )^2 \right | \le n_1^{-\delta}
\end{split}
\end{equation}
almost surely.
\end{thm}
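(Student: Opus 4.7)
The plan is to follow the resolvent method via cumulant expansion. The starting point is the identity $(M-z)G(z) = I$, which after normalization gives the master equation
$$1 + zg(z) = \frac{1}{n_1}\Tr(MG) = \frac{1}{n_1 m}\sum_{a,b,\mu} Y_{a\mu}Y_{b\mu}G_{ba}.$$
I would work with the expectation of this identity and apply the standard cumulant expansion to the distinguished factor $Y_{a\mu}$:
$$\mathbf{E}[Y_{a\mu} F(Y)] = \sum_{k\geq 1}\frac{1}{(k-1)!}\sum_{\alpha_2,\ldots,\alpha_k}\kappa_k(Y_{a\mu},Y_{\alpha_2},\ldots,Y_{\alpha_k})\,\mathbf{E}\!\left[\partial^{k-1}_{Y_{\alpha_2}\cdots Y_{\alpha_k}}F(Y)\right],$$
with $F(Y)=m^{-1}\sum_{b,\mu}Y_{b\mu}G_{ba}$. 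Each derivative either strikes the explicit $Y$ factor (producing a Kronecker delta) or differentiates $G$ through the identity $\partial G/\partial Y_{cd} = -m^{-1}G(e_c e_d^{\top} Y^{*}+Y e_d e_c^{\top})G$, generating products of resolvent entries together with new $Y$ insertions that feed back into the expansion.

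The structural input is the computation of the joint cumulants of the entries $Y_{ab}=f((WX)_{ab}/\sqrt{n_0})$. Two entries $Y_{a_1 b_1}$, $Y_{a_2 b_2}$ whose index pairs share neither a row nor a column are independent, so the connected cumulants live on index patterns in which consecutive entries share a row (via $W_{a,\cdot}$) or a column (via $X_{\cdot,b}$). A Hermite expansion $f(x)=\sum_{l\geq 1}c_l h_l(x/(\sigma_w\sigma_x))$ combined with the approximate Gaussianity of $(WX)_{ab}/\sqrt{n_0}$ shows that the variance of a single entry is $\theta_1(f)+O(n_0^{-1/2})$, while the length-$2k$ cycle cumulant $\kappa(Y_{a_1 b_1},Y_{a_1 b_2},Y_{a_2 b_2},\ldots,Y_{a_k b_k},Y_{a_k b_1})$ is of order $\theta_2(f)^{k}\,n_0^{1-k}$, and all connected cumulants attached to non-cyclic index patterns are strictly subleading after contraction with the resolvent. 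This is precisely the ``cycle correlation'' structure announced in the introduction, and it is the non-linear analog of the Wick contractions in the moment method of~\cite{PW17,1904.03090}.

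With these cumulants in hand, I plug them into the expansion and carry out the power count: each index sum contributes a factor proportional to $n_0$, $n_1$, or $m$, which must be matched against the $n_0^{1-k}$ scaling of the length-$2k$ cycle cumulant and the resolvent factors from the derivatives. Only four structural terms then survive at leading order, and they correspond exactly to the four monomials in~\eqref{self-const eq}: the quadratic $\theta_1$ Stein term, the mixed $\theta_2/\psi$ contribution from the length-$2$ cycle combined with the resolvent differentiation, and the $\theta_2(\theta_1-\theta_2)/\psi$ and $\theta_2^2$ contributions coming from the length-$4$ cycle cumulant. Each such term is re-expressed in terms of $g$ itself by recognising the traces $n_1^{-1}\Tr G=g$ and $n_1^{-1}\Tr(MG)=1+zg$, producing~\eqref{self-const eq} at the level of expectations. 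To upgrade to the almost-sure bound with the stated $n_1^{-\delta}$ precision, I would use Ward identities $\sum_b|G_{ab}|^2 = (\Im G)_{aa}/\Im z$ together with $\|G\|\leq (\Im z)^{-1}$ for a priori resolvent control, apply the same cumulant machinery to $\mathbf{E}|g-\mathbf{E}g|^2$ for a fluctuation estimate, and exploit the logarithmic cut-off of $f$ set up in Section~\ref{section2} to bound derivatives of $F$ polylogarithmically.

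The main obstacle is the combinatorial bookkeeping in the third step: isolating precisely the four surviving structural terms among the proliferating cascade of products generated by iterating the derivative identity, and proving that every remaining contribution (longer cycles, non-cyclic connected cumulants, and multiply-differentiated resolvent chains) is uniformly $O(n_1^{-\delta})$ for $\Im z > n_1^{-1/4+\epsilon}$. The exponent $1/4$ in the spectral threshold is dictated by the level at which length-$6$ cycle cumulants would begin to compete with the quartic error; obtaining it requires iterating the expansion one extra step to expose cancellations that are invisible at first order, together with careful use of the Ward identity at each iteration.
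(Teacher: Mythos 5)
Your overall strategy (resolvent identity, cumulant expansion in the distinguished factor $Y_{a\mu}$, the cycle-correlation structure of the cumulants of $Y$, power counting, and a separate concentration estimate) is indeed the route the paper takes, and your cumulant computation via a Hermite expansion is a legitimate variant of the paper's Fourier-inversion argument for Proposition~\ref{prop:3.5}. However, there is a genuine gap in your combinatorial step. You claim that only four structural terms survive, attributing the $\theta_2^2$ and $\theta_2(\theta_1-\theta_2)/\psi$ monomials to the length-$4$ cycle cumulant alone, and you plan to show that \emph{longer} cycles are uniformly $O(n_1^{-\delta})$. This is false: for a cycle of length $2k$ the bookkeeping gives a factor $n_1^{-1}$ from the normalized trace, $m^{-k}$ from the derivatives, $n_1^k m^k$ from the index summations, and $n_0^{1-k}$ from the cumulant, so the net power is $\sim 1$ for \emph{every} $k\ge 2$. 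Cycles of all even lengths contribute at leading order, and no choice of $\Im z$ makes them small. The quartic equation~\eqref{self-const eq} does not come from truncating at $k=2$; it comes from resumming the entire series over $k$: after computing the derivatives (the analogue of Lemma~\ref{lem3.6}), one shifts the index in the $k$-sum and recognizes the same infinite sum inside itself, together with quantities already identified with $g$ and $1+zg$, which closes the hierarchy and produces the $\theta_2(\theta_1-\theta_2)$ term as a resummation effect. Without this step your expansion does not close, and the attempted error bound on longer cycles would fail.

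Relatedly, your explanation of the threshold $\Im z>n_1^{-1/4+\epsilon}$ — that it marks where length-$6$ cycle cumulants start to compete — is not correct, since those terms are order one regardless of $z$. In the paper the restriction comes from the fluctuation estimate: a martingale-difference decomposition with Burkholder's inequality gives $\mathbf{E}\lvert g-\mathbf{E}g\rvert^4\lesssim n_1^{-2}(\Im z)^{-4}$ (Lemma~\ref{lem3.7}), and the exponent $1/4$ is what makes this fourth-moment bound strong enough for a Markov plus Borel--Cantelli argument to yield the almost-sure statement. Your proposal to control fluctuations only through $\mathbf{E}\lvert g-\mathbf{E}g\rvert^2$ would not suffice for the almost-sure bound at the stated precision; you would need at least a fourth-moment (or high-probability) estimate of the same strength.
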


\begin{rmk}~
  \begin{enumerate}[itemsep=0em,topsep=0em,label=(\roman*),labelindent=.8em,labelwidth=1em, labelsep*=.5em, leftmargin =!]
    \item We obtain an analogous result for complex feature and weight matrices \(W,X\), c.f.\ Remark~\ref{remark complx}.
    \item Note that the quartic self-consistent equation~\eqref{self-const eq} may not have a unique solution such that \(\Im g(z) >0\). However, it has a unique solution which is analytic in the upper half-plane and satisfies \(g(z)\sim -1/z\) for large \(\lvert z\rvert\).
    \item Since the resolvent itself satisfies \(\Tr G(z)/n_1\sim-1/z\) for large \(\abs{z}\) and is analytic in the upper half-plane, by continuity Theorem~\ref{thm2.2} implies that \(g(z)\) is approximately given by a properly chosen solution of~\eqref{self-const eq}. Then, the limiting spectral measure itself can be recovered via the Stieltjes inversion formula,
    \[\mu(\lambda) = \lim_{\epsilon \to 0^{+}} \frac{1}{\pi} \Im g(\lambda + i \epsilon),\]
    and Theorem~\ref{thm2.1} follows from Theorem~\ref{thm2.2}.
    \item It follows from the self-consistent equation~\eqref{self-const eq} that the limiting spectral measure \(\tilde{\mu}\) is absolutely continuous w.r.t.\ the Lebesgue measure, and therefore so is \(\mu\) away from zero. Moreover, for large \(z\), equation~\eqref{self-const eq} has real solutions and thus via Stieltjes inversion the limiting measure \(\mu\) is compactly supported.
  \end{enumerate}
\end{rmk}

\begin{rmk}
It should be noted that Theorem~\ref{thm2.1} and Theorem~\ref{thm2.2} were proven in~\cite{PW17,1904.03090} under different assumptions and with a different method. The result in~\cite{PW17} was obtained for i.i.d.\ Gaussian features and weights, whereas~\cite{1904.03090} extends the result to the case where both the inputs and the random weights have sub-Gaussian tails but are not necessarily Gaussian.
\end{rmk}

Observing equation~\eqref{self-const eq}, we note that if \(\theta_2(f) = 0\), then the limiting measure \(\mu\) is exactly the Marchenko-Pastur \(\mu_{MP}\) distribution with parameter \(\phi/\psi\). Indeed, in this case, \(g(z)\) approximately satisfies the quadratic equation
\begin{equation}\label{MP eq}
1 + \left (z + \theta_1(f) \left ( \frac{\phi}{\psi} -1 \right ) \right ) g(z)+ \theta_1(f) \frac{\phi}{\psi} z g(z)^2 \approx 0,
\end{equation}
which corresponds to the self-consistent equation satisfied by the Stieltjes transform of \(\mu_{MP}\)~\cite{MR208649}. As discussed in the introduction, this consideration is relevant when studying multilayer networks. Pennington and Worah~\cite{PW17} conjectured that the asymptotic spectral distribution is preserved through multiple layers only by activation functions with \(\theta_2(f)=0\) and is given by the Marchenko-Pastur distribution in each layer. Benigni and P\'{e}ch\'{e}~\cite{1904.03090} then proved this conjecture for bounded activation functions satisfying \(\theta_2(f)=0\). Moreover, if \(\theta_1(f) = \theta_2(f)\), then equation~\eqref{self-const eq} becomes cubic. In particular, the equality \(\theta_1(f) = \theta_2(f)\) holds if and only if \(f\) is a linear function (for more details, we refer to the supplementary material in~\cite{PW17}). In this case, \(M = \frac{1}{m}YY^\ast\) with \(Y=WX\), and thus the limiting measure \(\mu\) corresponds to the limiting spectral distribution of a product Wishart matrix. The spectral density for matrices of this type has been computed in~\cite{1401.7802, MR3251989}.

\begin{figure}[htbp]
  \centering
  \begin{tikzpicture}
    \begin{axis}[width=7.5cm,height=4cm, ymin=0,xmax=1.2,title={$f(x)=\tanh(x)$},ylabel={Without bias}]
        \addplot[ybar,bar width=1.2/40,fill=col3,bar shift=0.0] table[col sep=comma,x index=0,y index=1] {hist_tanh.csv};
        \addplot[thick,mark=none] table[col sep=comma,x index=0,y index=1] {curve_tanh.csv};
    \end{axis}
\end{tikzpicture}
\begin{tikzpicture}
    \begin{axis}[width=7.5cm,height=4cm, ymin=0,xmax=80,title={$f(x)=x^3$}]
        \addplot[ybar,bar width=80/40,fill=col3,bar shift=0.0] table[col sep=comma,x index=0,y index=1] {hist_cube.csv}; 
        \addplot[thick,mark=none] table[col sep=comma,x index=0,y index=1] {curve_cube.csv};
    \end{axis}
\end{tikzpicture}\\
\begin{tikzpicture}
  \begin{axis}[width=7.5cm,height=4cm, ymin=0,xmax=1.2,ylabel={With bias}]
      \addplot[ybar,bar width=1.2/40,fill=col3,bar shift=0.0] table[col sep=comma,x index=0,y index=2] {hist_tanh.csv};
      \addplot[thick,mark=none] table[col sep=comma,x index=0,y index=2] {curve_tanh.csv};
  \end{axis}
\end{tikzpicture}
\begin{tikzpicture}
  \begin{axis}[width=7.5cm,height=4cm, ymin=0,xmax=80]
      \addplot[ybar,bar width=80/40,fill=col3,bar shift=0.0] table[col sep=comma,x index=0,y index=2] {hist_cube.csv};
      \addplot[thick,mark=none] table[col sep=comma,x index=0,y index=2] {curve_cube.csv};
  \end{axis} 
\end{tikzpicture}
  \caption{We present the eigenvalue histogram of the covariance matrix \(YY^\ast\) for a single random realisation together with the theoretical limit from Theorems~\ref{thm2.1} and~\ref{thm2.6} for the functions \(f(x)=\tanh(x)\) and \(f(x)=x^3\) with and without additive bias. We note that the presence of an additive bias can both increase or decrease the largest singular value. The numerical experiments were conducted for the parameters \(n_1=3000\), \(\phi=\sigma_x=\sigma_w=1\), \(\psi=5\) (left) or \(\psi=2\) (right), and \(\sigma_b=0\) (top) or \(\sigma_b=0.25\) (bottom).}
 \end{figure}
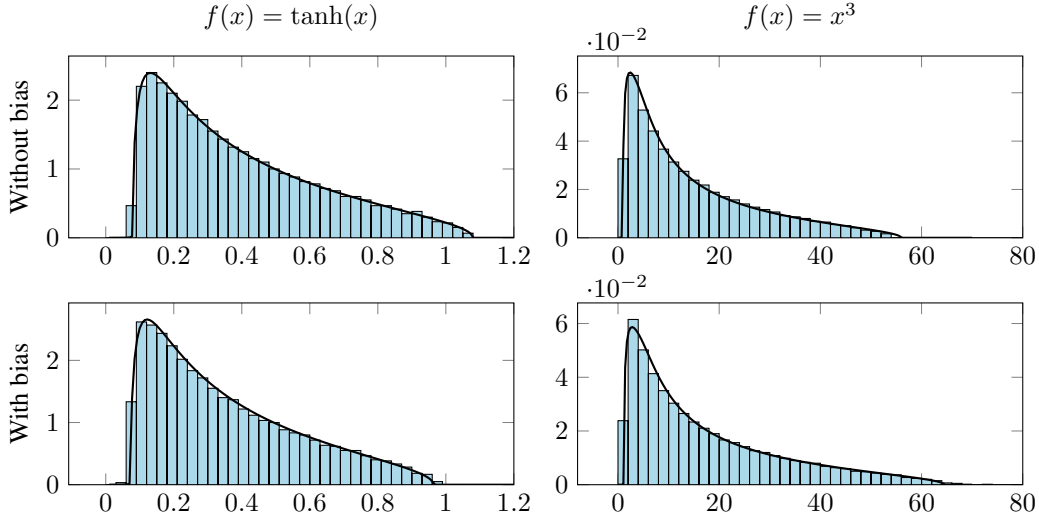
\subsection{Additive bias case}
The previous model can be generalised by adding random biases. In neural networks, the bias is an additional parameter that allows the model to better fit the given data. In this case, for each input data \(x \in \mathbb{R}^{n_0}\), a bias vector \(b \in \mathbb{R}^{n_1}\) is added to the vector \(Wx \in \mathbb{R}^{n_1}\). We then apply a non-linear function \(f \colon \mathbb{R} \to \mathbb{R}\) in an element-wise fashion to its vector arguments \(Wx+b\) in order to obtain \(n_1\) random features \(f(Wx + b) \in \mathbb{R}^{n_1}\).

We consider a random bias matrix \(B \in \mathbb{R}^{n_1 \times m}\) of i.i.d.\ Gaussian random variables \(B_{ij}=B_i\) with zero mean and variance \(\mathbf{E} B^2_i  = \sigma_b^2\). Note that the random matrix \(B\) has rank 1. Let \(X \in \mathbb{R}^{n_0 \times m}\) and \(W \in \mathbb{R}^{n_1 \times n_0}\) be random matrices with i.i.d.\ entries, defined as before. Moreover, let \(f \colon \mathbb{R} \to \mathbb{R}\) be a \(C^\infty\) function satisfying
\begin{equation}\label{eq:2.7}
\int_\mathbb{R} f \left (\sqrt{\sigma_w^2 \sigma_x^2 + \sigma_b^2}  \, x \right) \, \frac{e^{-x^2/2}}{\sqrt{2 \pi}} \text{d}x=0.
\end{equation}
Just as before, without loss of generality, upon replacing \(f\) by \(f(\cdot) \chi(\log^{-1}(n_0) \, \cdot)\), we may assume that \(f\) is a smooth function with compact support. We then define the random matrix \(M\) by
\begin{equation}\label{(2.8)} 
M = \frac{1}{m} Y Y^\ast \in \mathbb{R}^{n_1 \times n_1} \quad \text{with} \enspace Y = f \left ( \frac{WX}{\sqrt{n_0}} + B\right ),
\end{equation}
where \(f\) is applied entry-wise. We introduce the parameter 
\begin{equation*}
  \widetilde \sigma \coloneqq \sqrt{\frac{\sigma_w^2\sigma_x^2(\sigma_w^2\sigma_x^2+2\sigma_b^2)}{\sigma_w^2\sigma_x^2+\sigma_b^2}},
\end{equation*}
and we define the following integral parameters:

\begin{equation}\label{eq:2.9}
\begin{split}
\theta_1(f) &\coloneqq \int_\mathbb{R}  f^2 \left ( \sqrt{\sigma^2_w \sigma^2_x + \sigma_b^2} x \right ) \frac{e^{-x^2/2}}{\sqrt{2 \pi}} \mathrm{d}x,\\
\theta_{1,b}(f) &\coloneqq \frac{1}{2 \pi \widetilde \sigma \sqrt{\sigma_w^2 \sigma_x^2 + \sigma_b^2}} \int_{\mathbb{R}^2}  f(x_1) f(x_2) \exp \left ( - \frac{x_1^2 + x_2^2}{2  \widetilde{\sigma}^2} +\frac{ \sigma_b^2 x_1 x_2}{ \widetilde{\sigma}^2 (\sigma_w^2 \sigma_x^2 + \sigma_b^2)}\right ) \mathrm{d}\bm{x},\\
\theta_{2}(f) &\coloneqq \frac{\sigma_w \sigma_x}{2 \pi \widetilde \sigma \sqrt{\sigma_w^2 \sigma_x^2 + \sigma_b^2}} \int_{\mathbb{R}^2}  f'(x_1) f'(x_2) \exp \left ( - \frac{x_1^2 + x_2^2}{2  \widetilde{\sigma}^2} +\frac{ \sigma_b^2 x_1 x_2}{  \widetilde{\sigma}^2 (\sigma_w^2 \sigma_x^2 + \sigma_b^2)}\right )  \mathrm{d}\bm{x}.
\end{split}
\end{equation}

We can now state the analogue of Theorem~\ref{thm2.2} in the additive bias case. In particular, the following theorem shows that the normalized trace of the resolvent of \(M\) approximately satisfies the self-consistent equation~\eqref{self-const eq} with parameters given by~\eqref{eq:2.9}.

\begin{thm}\label{thm2.6}
The Stieltjes transform \(g\) satisfies~\eqref{self-const eq} with parameters given by~\eqref{eq:2.9}, where \(\theta_1(f)\) is replaced by \(\theta_1(f)-\theta_{1,b}(f)\). Moreover, there exists a single outlier eigenvalue \(\lambda_{\max} = n_1 \theta_{1,b}(1+\mathcal O(n_1^{-1/2}))\) of \(M\) that is separated from the support of the rest of the spectrum. 
\end{thm}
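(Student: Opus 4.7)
The plan is to isolate the rank-one structure induced by the shared bias and apply the method of Theorem~\ref{thm2.2} to the centered remainder. Let $\hat f(b) \coloneqq \int_\mathbb{R} f(\sigma_w\sigma_x x + b)\, e^{-x^2/2}/\sqrt{2\pi}\,\mathrm dx$ denote the conditional mean given the bias, define $F\in\mathbb R^{n_1}$ by $F_i \coloneqq \hat f(B_i)$, and write $\mathbf 1 \in \mathbb R^{m}$ for the all-ones vector. Decompose
\begin{equation*}
Y = \widetilde Y + F\mathbf 1^\ast, \qquad \widetilde Y_{ij} \coloneqq Y_{ij} - \hat f(B_i),
\end{equation*}
so that $\mathbf E[\widetilde Y_{ij}\mid B]=0$ and the dominant rank-one effect of the bias is absorbed into $F\mathbf 1^\ast$.

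First, I would compute the cumulants of $\widetilde Y$. A conditional-Gaussian calculation gives $\mathbf E\widetilde Y_{ij}^2 = \theta_1(f)-\theta_{1,b}(f)$ and $\mathbf E[\widetilde Y_{ij}\widetilde Y_{ij'}]=0$ for $j\neq j'$, since the same-row covariance $\theta_{1,b}$ has been subtracted off. For the higher cycle cumulants one applies the cumulant expansion to the bivariate Gaussian vector $((WX)_{ij}/\sqrt{n_0}+B_i,(WX)_{ij'}/\sqrt{n_0}+B_i)$, whose entries are correlated by $\sigma_b^2$, and the resulting two-point Gaussian integrals reproduce exactly the bivariate formula~\eqref{eq:2.9} for $\theta_2(f)$ in place of the univariate formula in~\eqref{theta def}.

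Second, apply the resolvent argument of Theorem~\ref{thm2.2} to $\widetilde M \coloneqq \widetilde Y\widetilde Y^\ast/m$. Since the cumulant structure of $\widetilde Y$ mimics that of the bias-free $Y$ but with parameters $(\theta_1-\theta_{1,b},\theta_2)$ taken from~\eqref{eq:2.9}, the cycle-graph combinatorics driving the proof of Theorem~\ref{thm2.2} produce~\eqref{self-const eq} with exactly these parameter replacements. Third, treat the rank-one remainder as a perturbation. Expanding
\begin{equation*}
M = \widetilde M + \frac{1}{m}\bigl(\widetilde Y\mathbf 1 F^\ast + F\mathbf 1^\ast\widetilde Y^\ast\bigr) + F F^\ast,
\end{equation*}
the bulk Stieltjes transform is unaffected by a bounded-rank perturbation up to error $O(1/n_1)$, so the self-consistent equation carries over from $\widetilde M$ to $M$. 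For the outlier, the law of large numbers combined with a CLT yields $\|F\|^2 = n_1\theta_{1,b}(1+\mathcal O(n_1^{-1/2}))$ almost surely, while the bulk analysis provides $\|\widetilde M\|=O(1)$ and the cross terms have operator norm $O(\sqrt{n_1})$ since $\|\widetilde Y\mathbf 1\|^2\sim n_1 m(\theta_1-\theta_{1,b})$. A standard rank-one perturbation argument via the Schur complement formula applied to $G(z)$ then isolates a single outlier eigenvalue at $\lambda_{\max}=\|F\|^2+O(1)=n_1\theta_{1,b}(1+\mathcal O(n_1^{-1/2}))$, well separated from the bulk.

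The main obstacle is step one: the bias adds a Gaussian component to the argument of $f$ that induces nontrivial same-row correlations of strength $\sigma_b^2$, forcing the cumulant expansion to work with bivariate Gaussian integrals rather than the univariate ones used in the bias-free proof, and one must verify that after subtracting $F\mathbf 1^\ast$ no residual rank-one artifact contaminates the cycle cumulants. Once the cumulants of $\widetilde Y$ are pinned down to give $\theta_1-\theta_{1,b}$ as the variance and the bias-adjusted $\theta_2$ as the cycle parameter, the bulk analysis is a reuse of the resolvent argument of Theorem~\ref{thm2.2}, and the outlier computation reduces to a routine rank-one BBP-type argument combined with concentration of $\|F\|^2$.
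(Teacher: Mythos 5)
Your proposal is essentially correct, but it takes a genuinely different route from the paper. The paper never centers \(Y\): it computes the cumulants of the biased matrix directly (Proposition~\ref{prop:4.1}), keeping the non-decaying same-row cumulant \(\kappa(Y_{i_1i_2},Y^\ast_{i_3i_1})\approx\theta_{1,b}\), and then evaluates the resulting new term in the cumulant expansion, which involves the off-diagonal sum \(\frac1m\sum_{i\ne j}(\frac{Y^\ast GY}{m})_{ij}\). The key technical input is Lemma~\ref{lem4.3}: via the Woodbury identity one sees that the flat vector \(e\) is an approximate eigenvector of \(\frac1m Y^\ast Y\) with eigenvalue \(n_1\theta_{1,b}\), so this sum is \(\approx 1-\frac{n_1}{m}(1+zg)\); this one lemma simultaneously produces the replacement \(\theta_1\mapsto\theta_1-\theta_{1,b}\) in~\eqref{self-const eq} and the outlier \(\lambda_{\max}\approx n_1\theta_{1,b}\). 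You instead subtract the conditional mean \(F\mathbf 1^\ast\), reduce the bulk to the bias-free machinery applied to \(\widetilde Y\), and recover the outlier by a finite-rank BBP-type perturbation with \(\lVert F\rVert^2\approx n_1\theta_{1,b}\); this makes the mechanism behind both effects transparent and replaces the resonant-sum evaluation by a soft rank-\(\le 2\) interlacing bound \(O(1/(n_1\Im z))\), which is indeed negligible for \(\Im z> n_1^{-1/4+\epsilon}\). The price is exactly the step you flag: you must verify that \(\widetilde Y\) has the bias-free cycle structure with parameters \((\theta_1-\theta_{1,b},\theta_2)\) — the bivariate-Gaussian computation giving the \(\theta_2\) of~\eqref{eq:2.9} is the same calculation the paper does for \(Y\) itself, and in addition one must check that the mixed, chain-type cumulants created by multilinearity when slots are replaced by \(-\hat f(B_i)\), as well as the residual \(O(n_0^{-1})\) same-row covariance of \(\widetilde Y\), contribute only subleading terms after summation (they do, since without the flat-vector resonance the relevant off-diagonal resolvent sums are \(O(m)\), but this needs to be said). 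Two minor corrections: the cross terms have operator norm \(O(\sqrt{n_1})\), so \(\lambda_{\max}=\lVert F\rVert^2+O(\sqrt{n_1})\) rather than \(+O(1)\) (still giving \(n_1\theta_{1,b}(1+\mathcal O(n_1^{-1/2}))\)), and the separation from the bulk requires some control on \(\lVert\widetilde M\rVert\) beyond the global law, though the paper's own argument is no more quantitative on this point.
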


We remark that the parameters \(\theta_{1,b}(f),\theta_2(f)\) can be alternatively expressed as infinite series, directly demonstrating that for \(\sigma_b\ne 0\) and non-trivial \(f\) both coefficients are strictly positive, \(\theta_{1,b}(f),\theta_2(f)>0\). For notational implicitly, we introduce the Hermite inner product 
\begin{equation*}
  \braket{f,g}_{\mathrm{He}} \coloneqq \frac{1}{\sqrt{2\pi}} \int_\mathbb{R} f(x)g(x) e^{-x^2/2}\dif x.
\end{equation*}

\begin{rmk} \label{rmk2.6}
  We have 
  \begin{equation}
    \begin{split}
      \theta_{1,b}(f) &=  \frac{\widetilde\sigma}{\sqrt{\sigma_w^2\sigma_x^2+\sigma_b^2}} \sum_{k\ge 0} \frac{1}{k!} \Bigl( \frac{\sigma_b^2}{\sigma_w^2\sigma_x^2+\sigma_b^2} \Bigr)^k \braket{x^k,f(\widetilde \sigma \cdot)}_{\mathrm{He}}^2  \\
      \theta_{2}(f) &=  \frac{\sigma_w^2\sigma_x^2\widetilde\sigma}{\sqrt{\sigma_w^2\sigma_x^2+\sigma_b^2}} \sum_{k\ge 0} \frac{1}{k!} \Bigl( \frac{\sigma_b^2}{\sigma_w^2\sigma_x^2+\sigma_b^2} \Bigr)^k \braket{x^k,f'(\widetilde \sigma \cdot)}_{\mathrm{He}}^2
    \end{split}
  \end{equation}
  and therefore \(\theta_{1,b}(f)=0\), \(\sigma_b\ne 0\) implies that \(f(\widetilde\sigma\cdot)\) is orthogonal to Hermite polynomials of any order, and consequently \(f\equiv 0\). Similarly, \(\theta_{2}(f)=0\), \(\sigma_b\ne 0\) implies that \(f\equiv \mathrm{const}\).
\end{rmk}

\subsection{Multiple layers}
In~\cite{PW17} it was observed empirically that in the bias-free case activation functions with \(\theta_2(f)=0\) have the remarkable property that for multiple layers
\begin{equation}
   Y^{(l+1)} \coloneqq f(W^{(l)}Y^{(l)}), \qquad Y^{(0)} \coloneqq X
\end{equation}
the singular value distributions of \(Y^{(1)},Y^{(2)},\ldots\) all asymptotically agree (up to scaling) with the probability distribution \(\mu(\theta_1,\theta_2)=\mu(\theta_1,0)\) from Theorem~\ref{thm2.1}. This observation is very natural from our point of view since we find that \(Y^{(1)}\) is approximately an i.i.d.\ random matrix if \(\theta_2(f)=0\), c.f.\ Proposition~\ref{prop:3.5} below. 

An interesting corollary of our Theorem~\ref{thm2.6} is that a similar isospectral property \emph{cannot} be ensured for the case of additive bias
\begin{equation}
  Y^{(l+1)}:=f(W^{(l)}Y^{(l)}+B^{(l)}), \qquad Y^{(0)} \coloneqq X.
\end{equation}
Indeed, in light of Remark~\ref{rmk2.6}, for \(\sigma_b \ne 0\) we have \(\theta_{1,b}(f),\theta_2(f)>0\) for all activation functions \(f\), and therefore already the random matrix \(Y^{(1)}\) necessarily has leading order correlations, c.f.\ Proposition~\ref{prop:4.1} below. Hence, convergence of the spectral density to the solution of~\eqref{self-const eq} is not expected beyond the first layer. In Fig.~\ref{wasserstein} we test this result experimentally and choose the activation function \(f(x)=c_1|x|-c_2\) with \(c_1,c_2\) such that~\eqref{(2.1)} is satisfied and \(\theta_1(f)=1\). We find that in the bias-free case (left), irrespective of the network depth, the eigenvalues of the covariance matrix \(Y^{(l)} (Y^{(l)})^\ast\) converge to their theoretical limit from Theorem~\ref{thm2.1}, exactly as in~\cite[Fig. 1]{PW17}\footnote{In the notation of~\cite{PW17}, \(f=f_1\).}. In the case of an additive bias (right), no such convergence is observed, and this provides empirical evidence of our result.

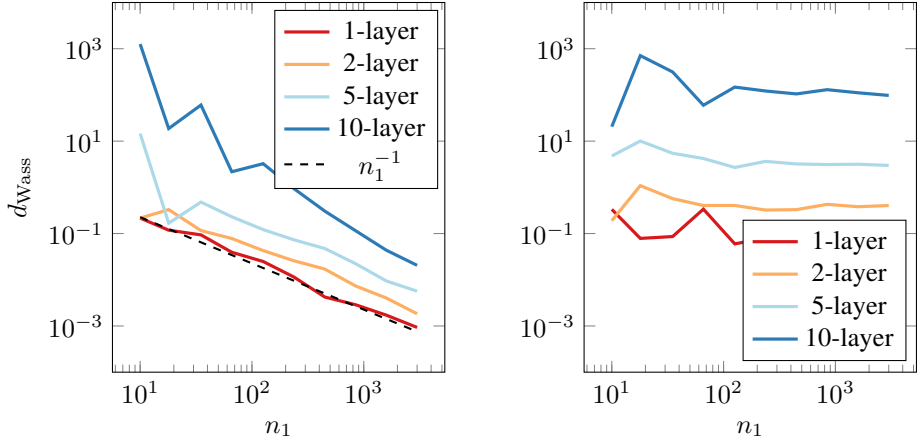
\begin{figure}[htbp]
  \centering
  \pgfplotsset{try min ticks=3}  
  \begin{tikzpicture}
      \begin{axis}[width=6cm,height=6.5cm,%
          xmode=log,ymode=log,ymin=0.0001,ymax=10000,xlabel={$n_1$},ylabel={$d_{\mathrm{Wass}}$}
          ]
          \addplot[col1,very thick,mark=none] table[col sep=comma,x index=0,y index=1] {wasserstein_no_bias.csv};
          \addlegendentry{1-layer}
          \addplot[col2,very thick,mark=none] table[col sep=comma,x index=0,y index=2] {wasserstein_no_bias.csv};
          \addlegendentry{2-layer}
          \addplot[col3,very thick,mark=none] table[col sep=comma,x index=0,y index=3] {wasserstein_no_bias.csv};
          \addlegendentry{5-layer}
          \addplot[col4,very thick,mark=none] table[col sep=comma,x index=0,y index=4] {wasserstein_no_bias.csv};
          \addlegendentry{10-layer}
          \addplot [thick, domain=10:3000, samples=19,dashed] {2.26*(x)^(-1)};
          \addlegendentry{\(n_1^{-1}\)}
      \end{axis}
  \end{tikzpicture}
  \hspace{0.7cm}
  \begin{tikzpicture}
    \begin{axis}[width=6cm,height=6.5cm,%
        ymin=0.0001,ymax=10000,%
        xmode=log,ymode=log, legend pos=south east,xlabel={$n_1$}
        ]
        \addplot[col1,very thick,mark=none] table[col sep=comma,x index=0,y index=1] {wasserstein_bias.csv};
        \addlegendentry{1-layer}
        \addplot[col2,very thick,mark=none] table[col sep=comma,x index=0,y index=2] {wasserstein_bias.csv};
        \addlegendentry{2-layer}
        \addplot[col3,very thick,mark=none] table[col sep=comma,x index=0,y index=3] {wasserstein_bias.csv};
        \addlegendentry{5-layer}
        \addplot[col4,very thick,mark=none] table[col sep=comma,x index=0,y index=4] {wasserstein_bias.csv};
        \addlegendentry{10-layer}
    \end{axis}
\end{tikzpicture}
  \caption{For randomly generated neural networks of varying depth and width, we compute the Wasserstein distance \(d_{\mathrm{Wass}}\) between the empirical eigenvalue density of the covariance matrix \(Y^{(l)} (Y^{(l)})^\ast\) to the distribution \(\mu\) from Theorem~\ref{thm2.1} for the activation function \(f(x)=c_1|x|-c_2\). In the bias-free case (left), the Wasserstein distance decays as the inverse of the network width, while in the case of an additive bias (right) no convergence can be observed. The numerical experiments were conducted for the parameters \(\phi=\sigma_x=\sigma_w=1\), \(\psi=2\) and \(\sigma_b=0\) (left) or \(\sigma_b=0.5\) (right).}\label{wasserstein} 
\end{figure} 

The spectrum of the covariance matrix \(Y^{(l)}(Y^{(l)})^\ast\) reflects the distortion of input data through the network and highly skewed distributions indicate poor conditioning which may impede learning performance~\cite{PW17}. \emph{Batch normalization} seeks to remedy the distortion by normalising by the trace of the covariance matrix \(Y^{(l)}(Y^{(l)})^t\) in each layer. In~\cite{PW17} it was suggested that choosing activation functions with \(\theta_2(f)=0\), i.e.\ functions which naturally preserve the singular value distribution, may serve as an alternative method of tuning networks for fast optimisation. Our result indicates that in the case of additive bias this alternative is not present. However, batch normalization seems to help stabilising the singular value distribution also in the additive bias case, c.f.\ Fig.~\ref{batch figure}.

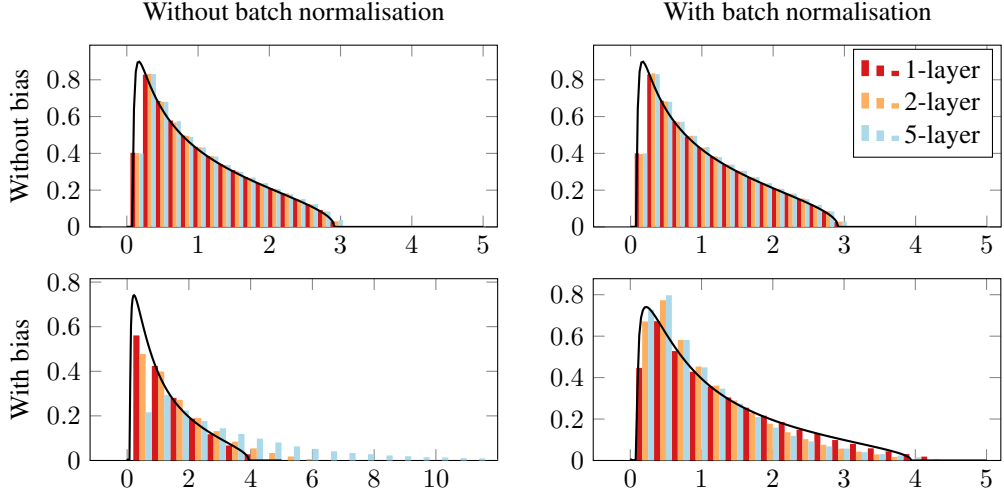
\begin{figure}[htbp]
\centering\noindent
\begin{tikzpicture}
    \begin{axis}[width=7cm,height=4cm, ymin=0,xmax=5.2,title={Without batch normalisation},ylabel={Without bias}]
        \addplot[ybar,bar width=3.5/20/3,fill=col1,bar shift=0.0,draw=col1] table[col sep=comma,x index=0,y index=1] {hists_nobatch_nobias.csv};
        \addplot[ybar,bar width=3.5/20/3,fill=col2,bar shift=3.5/20/3,draw=none] table[col sep=comma,x index=0,y index=2] {hists_nobatch_nobias.csv};
        \addplot[ybar,bar width=3.5/20/3,fill=col3,bar shift=3.5/20/3*2,draw=none] table[col sep=comma,x index=0,y index=3] {hists_nobatch_nobias.csv};
        \addplot[thick,mark=none] table[col sep=comma,x index=0,y index=1] {f1_curve.csv};
    \end{axis}
\end{tikzpicture}\; \hspace{0.3cm}
\begin{tikzpicture}
  \begin{axis}[width=7cm,height=4cm, ymin=0,xmax=5.2,title={With batch normalisation}]
    \addplot[ybar,bar width=3.5/20/3,fill=col1,bar shift=0.0,draw=none,ybar legend] table[col sep=comma,x index=0,y index=1] {hists_batch_nobias.csv};
    \addlegendentry{1-layer}
    \addplot[ybar,bar width=3.5/20/3,fill=col2,bar shift=3.5/20/3,draw=none,ybar legend] table[col sep=comma,x index=0,y index=2] {hists_batch_nobias.csv};
    \addlegendentry{2-layer}
    \addplot[ybar,bar width=3.5/20/3,fill=col3,bar shift=3.5/20/3*2,draw=none,ybar legend] table[col sep=comma,x index=0,y index=3] {hists_batch_nobias.csv};
    \addlegendentry{5-layer}
    \addplot[thick,mark=none] table[col sep=comma,x index=0,y index=1] {f1_curve.csv};
  \end{axis}
\end{tikzpicture}\\\noindent 
\begin{tikzpicture}
    \begin{axis}[width=7cm,height=4cm, ymin=0,xmax=12,ylabel={With bias},xtick={0,2,4,6,8,10}]
          \addplot[ybar,bar width=12/20/3,fill=col1,bar shift=0.0,draw=none] table[col sep=comma,x index=0,y index=1] {hists_nobatch_bias.csv};
          \addplot[ybar,bar width=12/20/3,fill=col2,bar shift=12/20/3,draw=none] table[col sep=comma,x index=0,y index=2] {hists_nobatch_bias.csv};
          \addplot[ybar,bar width=12/20/3,fill=col3,bar shift=12/20/3*2,draw=none] table[col sep=comma,x index=0,y index=3] {hists_nobatch_bias.csv};
          \addplot[thick,mark=none] table[col sep=comma,x index=0,y index=1] {f1_curve_bias.csv};
      \end{axis}
  \end{tikzpicture}\; \hspace{0.3cm}
  \begin{tikzpicture}
    \begin{axis}[width=7cm,height=4cm, ymin=0,xmax=5.2]
        \addplot[ybar,bar width=5/20/3,fill=col1,bar shift=0.0,draw=none] table[col sep=comma,x index=0,y index=1] {hists_batch_bias.csv};
        \addplot[ybar,bar width=5/20/3,fill=col2,bar shift=5/20/3,draw=none] table[col sep=comma,x index=0,y index=2] {hists_batch_bias.csv};
        \addplot[ybar,bar width=5/20/3,fill=col3,bar shift=5/20/3*2,draw=none] table[col sep=comma,x index=0,y index=3] {hists_batch_bias.csv};
        \addplot[thick,mark=none] table[col sep=comma,x index=0,y index=1] {f1_curve_bias.csv};
    \end{axis}
\end{tikzpicture}
  \caption{We present the eigenvalue distribution of neural networks of varying depth and in the presence/absence of both bias and batch normalization for the activation function \(f(x)=c_1|x|-c_2\). In the bias-free case, batch normalisation has no effect on the spectral stability, and throughout the network the theoretical distribution from Theorem~\ref{thm2.1} matches the actual eigenvalue distribution of the covariance matrix \(Y^{(l)}(Y^{(l)})^\ast\) well. In the case of an additive bias, the single-layer spectral density matches the theoretical limit from Theorem~\ref{thm2.6} to high accuracy. However, for multiple layers the spectral density diverges without additional batch normalization. Batch normalization alleviates the divergence, however the actual eigenvalue distribution deviates from the theoretical limit from Theorem~\ref{thm2.6}. The numerical experiments were conducted for the parameters \(n_1=3000\), \(\phi=\sigma_x=\sigma_w=1\), \(\psi=2\) and \(\sigma_b=0\) (top) or \(\sigma_b=0.5\) (bottom). Here we used batch normalisation of the form \(Y^{(l)}\mapsto c Y^{(l)}\) after each layer, choosing \(c\) to ensure unit empirical variance.}\label{batch figure}  
\end{figure}

\section{Outline of proof of Theorems~\ref{thm2.2} and~\ref{thm2.6}} \label{proof}

The proof of both Theorem~\ref{thm2.2} and~\ref{thm2.6} can be broken into two distinct parts. The first step is to show that \(Y=f \left ( \frac{WX}{\sqrt{n_0}} \right ) \in \mathbb{R}^{n_1 \times m}\) can be viewed as a correlated random matrix with \textit{cycle correlations}, c.f.~Propositions~\ref{prop:3.5} and~\ref{prop:4.1} below. The second step is to prove the global law for the random matrix \(M=\frac{1}{m}Y Y^\ast\) with the cycle correlations. In the following, we will sketch the derivation of the self-consistent equation. A more detailed proof is provided in the supplementary material.

The key idea is to use a multivariate cumulant expansion formula. Cumulants of a random vector \(\bm{X} = (X_1, \dots, X_n)\) can be defined in a combinatorial way by
\begin{equation}\label{cum comb}
\kappa(X_1, \dots, X_n) =  \sum_{\pi} (-1)^{| \pi | -1} (|\pi| -1)! \prod_{B \in \pi} \mathbf{E} \left ( \prod_{i \in B} X_i \right ),
\end{equation}
where the sum runs over all partitions \(\pi\) of the set \([n]=\{1, \dots, n \}\), the product runs over the blocks \(B\) of the partition \(\pi\), and \(|\pi|\) is the number of blocks in the partition. The following expansion is commonly referred to as a cumulant expansion and generalises the Gaussian integration by parts. In the context of random matrix theory, the usefulness of this expansion was first observed in~\cite{MR1411619} and later revived in~\cite{MR3678478, MR3800833}. A proof of the following lemma is provided in Appendix~\ref{appendixA} for completeness. 

\begin{lem}[Cumulant expansion]\label{lemma3.1}
If \(\bm{X} = (X_1, \dots, X_n)\) is a random vector with finite moments of all orders, then
\[\mathbf{E} X_1 f(\bm{X}) = \sum_{l \geq 1} \sum_{i_1, \dots, i_l} \frac{\kappa(X_1, X_{i_1}, \dots, X_{i_l})}{l!} \: \mathbf{E} \partial_{i_1} \cdots \partial_{i_l} f(\bm{X}),\]
where \(f\colon \mathbb{R}^n \to \mathbb{R}\) is smooth. 
\end{lem}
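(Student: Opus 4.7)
The plan is to prove the identity first for the special class of exponential test functions \(f_{\bm{t}}(\bm{x}) \coloneqq \exp\bigl(\sum_k t_k x_k\bigr)\), where everything collapses to calculus on the joint cumulant generating function, and then extend to arbitrary smooth \(f\) by linearity combined with a Fourier (or Taylor) argument.

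For the exponential test functions the argument is essentially a one-line calculation. Set \(M(\bm{t}) \coloneqq \mathbf{E} f_{\bm{t}}(\bm{X})\) and \(K(\bm{t}) \coloneqq \log M(\bm{t})\), so that as formal power series
\[
K(\bm{t}) = \sum_{l\ge 1}\frac{1}{l!}\sum_{i_1,\ldots,i_l} \kappa(X_{i_1},\ldots,X_{i_l})\, t_{i_1}\cdots t_{i_l}.
\]
The left-hand side of the claim equals \(\mathbf{E} X_1 f_{\bm{t}}(\bm{X}) = \partial_{t_1} M(\bm{t}) = \bigl(\partial_{t_1}K(\bm{t})\bigr) M(\bm{t})\) by the chain rule. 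Since \(\partial_{i_1}\cdots\partial_{i_l} f_{\bm{t}}(\bm{x}) = t_{i_1}\cdots t_{i_l} f_{\bm{t}}(\bm{x})\), the right-hand side of the claim is also exactly \(\bigl(\partial_{t_1}K(\bm{t})\bigr) M(\bm{t})\). This yields the lemma for \(f=f_{\bm{t}}\), interpreted as an equality of formal power series in \(\bm{t}\); the assumption of finite moments of all orders guarantees that the coefficients on both sides are well-defined.

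To pass to general smooth \(f\), I would invoke Fourier inversion, writing \(f(\bm{x}) = \int \widehat f(\bm{\xi})\, e^{i\braket{\bm{\xi},\bm{x}}}\,\dif\bm{\xi}/(2\pi)^n\) for Schwartz \(f\), substituting \(\bm{t}=i\bm{\xi}\) in the previous step and exploiting linearity of both sides. The purely combinatorial alternative — which also avoids any Fourier argument and is closer to the paper's spirit — is to expand \(\mathbf{E} X_1 X_{i_1}\cdots X_{i_l}\) via the moment-to-cumulant formula~\eqref{cum comb}, separating the unique block of the partition that contains the position of \(X_1\) (of some size \(l'+1\)) from the remaining blocks (which recombine into \(\mathbf{E}\prod_{k\notin J} X_{i_k}\)); regrouping the resulting double sum against the Taylor series of \(f\) reproduces the stated identity term by term.

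The principal obstacle is that the series on the right-hand side is infinite, and for non-polynomial \(f\) its convergence requires either analyticity of \(f\) or suitable decay of the high-order joint cumulants — neither of which holds in the generality stated. In the applications later in the paper what is genuinely used is the truncated version of the expansion, keeping only terms with \(l\le L\) and controlling the remainder by \(\sup|\partial^{L+1}f|\) together with the total size of the \((L+1)\)-st cumulants. Accordingly I would formulate and prove the lemma either as a formal-series identity under the stated hypotheses, or (preferably) with an explicit truncation error obtained by combining the exponential case above with Taylor's theorem with remainder; this last form is exactly what is required downstream in Sections~\ref{section2}\,ff.
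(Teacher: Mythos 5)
Your proposal is correct and is essentially the paper's own argument: the paper proves the lemma by Fourier inversion, writing \(\partial_{t_1}\varphi_{\bm{X}}=(\partial_{t_1}\log \varphi_{\bm{X}})\,\varphi_{\bm{X}}\) and expanding the log-characteristic function in its cumulant series before inverting back, which is precisely your exponential test-function identity \(\partial_{t_1}M=(\partial_{t_1}K)\,M\) (with \(\bm t = i\bm\xi\)) integrated against \(\hat f\). Your caveat about convergence of the infinite series is fair but applies equally to the paper's proof, which treats the expansion at the same formal level; since in the applications only finitely many cycle cumulants contribute to leading order, the truncated formulation with a Taylor remainder that you suggest is indeed what is effectively used downstream.
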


We start with the defining identity of the resolvent, \(\mathbf{1}_{n_1} + z G = MG\), where \(\textbf{1}_{n_1}\) denotes the \(n_1 \times n_1 \) identity matrix, and we compute its average trace:
\begin{equation} \label{eq:3.2}
1 + zg = \frac{1}{n_1} \Tr \frac{Y Y^\ast G}{m} = \frac{1}{n_1} \sum_{i=1}^{n_1} \sum_{j=1}^{m} Y_{ij} \left (\frac{Y^\ast G}{m} \right )_{ji},
\end{equation}
where \(g(z) = \frac{1}{n_1} \Tr (M-z \mathbf{1}_{n_1})^{-1}\) is the normalized trace of the resolvent of \(M\). Since the random variable \(\left ( Y^\ast G \right )_{ji}\) can be seen as a function of \(Y_{ij}\), we can take the expectation on both sides of~\eqref{eq:3.2} and apply Lemma~\ref{lemma3.1}:
\begin{equation}\label{eq:3.3}
1 + z \, \mathbf{E} g = \frac{1}{n_1} \sum_{k \geq 1} \sum_{i_1, \dots, i_{2k}} \frac{\kappa(Y_{i_1 i_2}, Y_{i_3 i_4}, \dots, Y_{i_{2k-1} i_{2k}})}{(k-1)!} \: \mathbf{E} \partial_{Y_{i_3 i_4}} \cdots  \partial_{Y_{i_{2k-1} i_{2k}}} \left (\frac{Y^\ast G}{m} \right )_{i_2 i_1}.
\end{equation} 
The main goal now is to show that \(Y\) can be viewed as a random matrix with cycle correlations given as in the Propositions~\ref{prop:3.5} and~\ref{prop:4.1} below: Prop.~\ref{prop:3.5} refers to the bias-free case and Prop.~\ref{prop:4.1} to the additive bias case. We postpone the proof of both propositions to Subsections~\ref{subsection3.4} and~\ref{subsection4.2}, resp.
\begin{prop}[Correlation structure without bias]\label{prop:3.5}
The random matrix \(Y\) defined by~\eqref{(2.2)} has joint cumulants given by 
\begin{equation}\label{eq:3.4}
        \begin{split}
            \kappa(Y_{i_1 i_2})&= \mathcal{O}(n_0^{-1/2}),\\
            \kappa(Y_{i_1 i_2},Y^\ast_{i_2 i_1})&\approx \theta_1(f),\\
             \kappa(Y_{i_1i_2},Y^\ast_{i_2i_3}, Y_{i_3 i_4}, \ldots,Y^\ast_{i_{2k} i_1}) & \approx \theta_2(f)^k n_0^{1-k},\quad k>1
        \end{split}
\end{equation}
where \(i_1, \dots, i_{2k}\) are all distinct, and we write \(X\approx Y\) as a shorthand notation for \(X=Y(1+\mathcal O(n_0^{-1/2}))\).
\end{prop}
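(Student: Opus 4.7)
The plan is to expand the activation $f$ in a Hermite basis adapted to the approximately Gaussian variables $u_{ij}\coloneqq (WX)_{ij}/\sqrt{n_0}$, reduce the joint cumulants of the $Y_{ij}=f(u_{ij})$ to joint cumulants of the $u_{ij}$ by multilinearity, and finally compute the latter directly from the sum representation $u_{ij}=n_0^{-1/2}\sum_\ell W_{i\ell}X_{\ell j}$ via Wick-type combinatorics on the independent entries of $W$ and $X$.

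\emph{Setup and the first two estimates.} Since $f$ is smooth with compact support, I write $f(\sigma_w\sigma_x x)=\sum_{n\ge 0} a_n\mathrm{He}_n(x)$ in probabilists' Hermite polynomials. The zero-mean condition~\eqref{(2.1)} kills $a_0$, integration by parts gives $a_1=\sigma_w\sigma_x\braket{1,f'(\sigma_w\sigma_x\cdot)}_{\mathrm{He}}=\sqrt{\theta_2(f)}$, and Parseval yields $\sum_{n\ge 1}n!\,a_n^2=\theta_1(f)$. The first two lines of~\eqref{eq:3.4} then follow from a conditional CLT: conditionally on $X$, $u_{i_1i_2}$ is a sum of $n_0$ independent centred summands with conditional variance $\sigma_w^2\sigma_x^2+\mathcal O(n_0^{-1/2})$, so an Edgeworth expansion together with~\eqref{(2.1)} gives $\mathbf E Y_{i_1i_2}=\mathcal O(n_0^{-1/2})$, and analogously $\mathbf E |Y_{i_1i_2}|^2=\theta_1(f)+\mathcal O(n_0^{-1/2})$.

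\emph{Cycle cumulants, leading Hermite mode.} Writing $\widetilde u_V\coloneqq u_V/(\sigma_w\sigma_x)$, multilinearity of joint cumulants gives
\begin{equation*}
\kappa(Y_{V_1},\ldots,Y_{V_{2k}})=\sum_{n_1,\ldots,n_{2k}\ge 1}\Bigl(\prod_j a_{n_j}\Bigr)\,\kappa\bigl(\mathrm{He}_{n_1}(\widetilde u_{V_1}),\ldots,\mathrm{He}_{n_{2k}}(\widetilde u_{V_{2k}})\bigr),
\end{equation*}
where $V_1,\ldots,V_{2k}$ are the index pairs around the cycle. The leading contribution comes from $n_j=1$ for all $j$, collapsing to $a_1^{2k}(\sigma_w\sigma_x)^{-2k}\kappa(u_{V_1},\ldots,u_{V_{2k}})$. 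The cycle identifies $W$-rows in the pairs $\{V_{2k},V_1\},\{V_2,V_3\},\ldots,\{V_{2k-2},V_{2k-1}\}$ and $X$-columns in $\{V_1,V_2\},\{V_3,V_4\},\ldots,\{V_{2k-1},V_{2k}\}$; expanding in the independent entries of $W$ and $X$, the only surviving Wick contraction is the one forced by this pair structure, which imposes $\ell_1=\ell_{2k}$, $\ell_2=\ell_3,\ldots$ from the $W$-variances and $\ell_1=\ell_2$, $\ell_3=\ell_4,\ldots$ from the $X$-variances. Around the cycle these constraints collapse to the single equality $\ell_1=\cdots=\ell_{2k}$, leaving exactly one free sum and yielding
\begin{equation*}
\kappa(u_{V_1},\ldots,u_{V_{2k}})=\sigma_w^{2k}\sigma_x^{2k}\,n_0^{1-k}\bigl(1+\mathcal O(n_0^{-1/2})\bigr),
\end{equation*}
where higher cumulants of the entries of $W,X$ contribute only at subleading order since any $\ell$-coincidence beyond pair matching reduces the number of free summations. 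Combined with $a_1^{2k}=\theta_2(f)^k$, this gives the claimed leading order $\theta_2(f)^k n_0^{1-k}$.

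\emph{Main obstacle.} The delicate step is verifying that every multi-index $(n_j)$ with some $n_j\ge 2$ contributes at most $n_0^{1/2-k}$. Each extra unit of Hermite degree at position $j$ introduces an additional factor of $\widetilde u_{V_j}$, forcing an additional Wick pair-contraction in the $WX$ expansion; one must show that any admissible diagram other than the single cycle identified above either leaves the correlation graph disconnected—so that the cumulant vanishes—or strictly reduces the number of free $\ell$-summations by at least one. This is a graph-counting argument to be carried out uniformly in $(n_j)$ and $k$, the intuition being that the cycle diagram with all $n_j=1$ is already maximally efficient in converting $\ell$-sums into factors of $n_0$; the tail over large Hermite orders is then controlled by the Parseval bound $\sum_n n!\,a_n^2=\theta_1(f)<\infty$.
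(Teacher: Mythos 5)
Your Hermite-expansion strategy is a legitimate alternative to the paper's argument (which works on the Fourier side: truncate \(\hat f\) at \(|t|\le n_0^{(k-1)/2k}\) using smoothness, expand \(\kappa(e^{it_1Z_{V_1}},\ldots)\) via the moment--cumulant partition formula of Nica--Speicher, identify the leading partitions as one block joining one copy of each \(Z_{V_j}\) plus internal pairings, and resum the latter into the Gaussian-smoothed integrals \(\int\widehat{f'}(t)e^{-\sigma_w^2\sigma_x^2t^2/2}\dif t=2\pi\theta_2^{1/2}/\sigma_w\sigma_x\)). Your identifications \(a_1^2=\theta_2(f)\), \(\sum_n n!a_n^2=\theta_1(f)\), and the Wick computation of \(\kappa(u_{V_1},\ldots,u_{V_{2k}})\approx(\sigma_w^2\sigma_x^2)^kn_0^{1-k}\) are correct and reproduce the paper's Lemmata on the second and cyclic cumulants of \((WX)/\sqrt{n_0}\). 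However, what you label the ``main obstacle'' is not a technical afterthought: it is exactly the content of the paper's proof, and you leave it unproven. Two concrete issues arise. First, the suppression of modes with some \(n_j\ge 2\) is not a pure graph-counting fact about the \(W,X\) expansion; e.g.\ for \((n_1,\ldots,n_4)=(3,1,1,1)\) the bare power expansion \(\kappa(\widetilde u_{V_1}^3,\widetilde u_{V_2},\widetilde u_{V_3},\widetilde u_{V_4})\) contains the term \(3\Var(\widetilde u_{V_1})\,\kappa(\widetilde u_{V_1},\ldots,\widetilde u_{V_4})\), which is of the \emph{same} order \(n_0^{1-k}\) as the leading term; it is only the subtraction \(-3x\) in \(\mathrm{He}_3\) together with \(\Var(\widetilde u_{V_1})=1+\mathcal O(n_0^{-1/2})\) that produces the cancellation down to \(\mathcal O(n_0^{1/2-k})\). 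Since the \(u_{ij}\) are only approximately Gaussian, these cancellations hold only up to quantified Edgeworth-type errors, and you must track them uniformly in \((n_j)\) and \(k\) --- this is what the paper's odd-\(l_i\), internal-pairing analysis accomplishes on the Fourier side, where nothing needs to cancel because the resummation produces \(\theta_2^{1/2}\) directly.

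Second, the interchange of the infinite Hermite sum with the joint cumulant, and the tail control, are not justified by Parseval alone: the diagrammatic bound for a fixed mode \((n_1,\ldots,n_{2k})\) carries combinatorial factors growing with \(\prod_j n_j!\), so \(\sum_n n!a_n^2<\infty\) does not obviously dominate the tail; the paper sidesteps this by exploiting the polynomial decay of \(\hat f\) (smoothness plus the log-scale cutoff \(\chi\)) to truncate the Fourier domain, which is the analogue of a quantitative tail bound in your setting and would have to be replaced by, say, rapid decay of the \(a_n\) together with explicit diagram estimates. Until these two points are carried out, the third line of~\eqref{eq:3.4} --- the heart of the proposition --- is asserted rather than proved; the first two lines and the leading-mode computation are fine and essentially coincide with the paper's Berry--Ess\'een and Wick arguments.
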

\begin{prop}[Correlation structure with bias]\label{prop:4.1}
The random matrix \(Y\) defined by~\eqref{(2.8)} has joint cumulants given by 
  \begin{equation}\label{eq:4.1}
    \begin{split}
            \kappa(Y_{i_1 i_2})&= \mathcal  O(n_0^{-1/2}),\\
            \kappa(Y_{i_1 i_2}, Y_{i_2 i_1}^\ast)& \approx \theta_1(f) ,\\
             \kappa(Y_{i_1 i_2}, Y_{i_3 i_1}^\ast) & \approx \theta_{1,b}(f) \\
              \kappa(Y_{i_1i_2},Y^\ast_{i_2i_3},Y_{i_3 i_4},\ldots,Y^\ast_{i_{2k} i_1})& \approx  \theta_2(f)^k n_0^{1-k},\quad k>1
        \end{split}
 \end{equation}
where \(i_1, \dots, i_{2k}\) are all distinct.
\end{prop}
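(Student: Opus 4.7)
The plan is to extend the analysis of Proposition~\ref{prop:3.5} by tracking how the shared bias $B_i$ within each row of $Y$ modifies the joint cumulant structure. Write $Y_{ij} = f(Z_{ij} + B_i)$ with $Z_{ij} \coloneqq (WX/\sqrt{n_0})_{ij}$, and note that $B_i$ is independent of $(W, X)$ and of $B_j$ for $j \ne i$. The proof rests on two ingredients: a central limit theorem for the bilinear forms $Z_{ij}$, and a Hermite expansion of $f$ around its (asymptotically Gaussian) argument, which together isolate the leading-order cumulant contributions in terms of $\theta_1$, $\theta_{1,b}$, and $\theta_2$.

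The single expectation and two-point cumulants reduce to Gaussian integrals. By the central limit theorem, $Z_{i_1 i_2} + B_{i_1}$ is asymptotically $\mathcal{N}(0, \sigma_w^2 \sigma_x^2 + \sigma_b^2)$, so the zero-mean condition~\eqref{eq:2.7} gives $\mathbf{E}[Y_{i_1 i_2}] = \mathcal{O}(n_0^{-1/2})$ and the second moment yields $\theta_1(f)$. For $\kappa(Y_{i_1 i_2}, Y_{i_3 i_1}^*) = \mathbf{E}[Y_{i_1 i_2} Y_{i_1 i_3}]$ (same row $i_1$), the pair $(Z_{i_1 i_2} + B_{i_1}, Z_{i_1 i_3} + B_{i_1})$ is asymptotically bivariate Gaussian with variance $\sigma_w^2 \sigma_x^2 + \sigma_b^2$ and covariance $\sigma_b^2$: the underlying $Z$'s are asymptotically independent because the columns $X_{\cdot, i_2}$ and $X_{\cdot, i_3}$ of $X$ are independent, so all correlation is inherited from the shared bias $B_{i_1}$. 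A direct check that $\sqrt{\det\Sigma} = \widetilde\sigma\sqrt{\sigma_w^2\sigma_x^2+\sigma_b^2}$ and that $\Sigma^{-1}$ reproduces the quadratic form appearing in~\eqref{eq:2.9} confirms $\mathbf{E}[Y_{i_1 i_2} Y_{i_1 i_3}] \approx \theta_{1,b}(f)$.

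For the cycle cumulants of order $k > 1$, the key structural observation is that, writing $\kappa(Y_{i_1 i_2}, Y_{i_2 i_3}^*, \ldots, Y_{i_{2k} i_1}^*) = \kappa(Y_{i_1 i_2}, Y_{i_3 i_2}, Y_{i_3 i_4}, \ldots, Y_{i_1 i_{2k}})$ (using $Y^\ast = Y^\top$ for real $Y$), every odd-indexed row $i_{2j-1}$ appears in exactly two of the $2k$ arguments while the biases $B_{i_{2j-1}}$ are mutually independent. I would decompose $f(z + B) = \sum_{n \ge 0} a_n(B) \widetilde H_n(z/\sigma)$ using orthonormal Hermite polynomials for $\mathcal{N}(0, \sigma^2)$ with $\sigma = \sigma_w \sigma_x$, so that $a_1(B) = \sigma \, \mathbf{E}_Z[f'(Z + B) \mid B]$ by Stein's identity. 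The $n = 0$ term only contributes to means (vanishing by~\eqref{eq:2.7}), the $n = 1$ term supplies the leading cumulant, and the $n \ge 2$ terms are subleading because cycle cumulants of products of higher Hermite polynomials in the bilinear $(W, X)$ variables carry additional $n_0^{-1/2}$ suppression. At leading order the $n = 1$ contribution factorises as $\sigma^{-2k} (\mathbf{E}_B[a_1(B)^2])^k \cdot \kappa_Z$, where $\kappa_Z \coloneqq \kappa(Z_{i_1 i_2}, Z_{i_3 i_2}, Z_{i_3 i_4}, \ldots)$. A standard Wick computation gives $\kappa_Z = (\sigma_w\sigma_x)^{2k} n_0^{1-k}$: the pairings of $W$-entries (forced by the distinct row indices) and of $X$-entries (forced by the distinct column indices) collapse the $n_0^k$-fold summation to a single free index. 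Using the identification of $\theta_2(f)$ in~\eqref{eq:2.9} with $\mathbf{E}_B[a_1(B)^2]$ up to normalisation, this reproduces $\theta_2(f)^k n_0^{1-k}$.

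The main obstacle is the rigorous control of two approximations: the Hermite truncation, which requires showing that $n \ge 2$ components contribute to the cycle cumulant at strictly smaller order than $n_0^{1-k}$ uniformly in $k$; and the central-limit comparison between the true bilinear forms $Z_{ij}$ and their Gaussian limits, compatible with the stated multiplicative accuracy $1 + \mathcal{O}(n_0^{-1/2})$. Both bounds follow from a diagrammatic enumeration of Wick-type pairings combined with the smoothness and compact support of $f$ (after the logarithmic cut-off preceding~\eqref{(2.8)}), which uniformly controls all Hermite coefficients $a_n(B)$. The combinatorial bookkeeping is parallel to the bias-free case treated in Proposition~\ref{prop:3.5}.
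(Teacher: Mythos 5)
Your treatment of the single expectation and of the two pair cumulants is correct and essentially the paper's own argument: identifying \((Z_{i_1i_2}+B_{i_1},Z_{i_1i_3}+B_{i_1})\) as asymptotically bivariate Gaussian with covariance matrix \(\Sigma\) as in~\eqref{eq:4.7}, and checking \(\det\Sigma=\widetilde\sigma^2(\sigma_w^2\sigma_x^2+\sigma_b^2)\), reproduces \(\theta_1\) and \(\theta_{1,b}\) exactly as the paper does via characteristic functions and Fourier inversion.

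The gap is in the cycle cumulants for \(k>1\). First, the assertion that the \(n=0\) Hermite component ``only contributes to means'' is false: \(a_0(B_i)=\mathbf{E}_Z[f(Z+B_i)\mid B_i]\) is a genuine random variable with \(\mathbf{E}_B[a_0(B)^2]=\theta_{1,b}>0\), and in the \(2k\)-cycle \emph{moment} it produces \(O(1)\) terms of size \(\theta_{1,b}^k\); these disappear only at the level of the \emph{cumulant}, because the biases of distinct rows are independent (the paper makes this cancellation explicit by subtracting the disconnected product \(\mathbf{E}[Y_{i_1i_2}Y^\ast_{i_4i_1}]\,\mathbf{E}[Y^\ast_{i_2i_3}Y_{i_3i_4}]\) at the end of its computation). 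You need to state and use this independence-based cancellation; it does not follow from~\eqref{eq:2.7}. Second, and more seriously, the claim that all \(n\ge 2\) components are suppressed by extra powers of \(n_0^{-1/2}\) ``parallel to the bias-free case'' does not survive the presence of the bias. With bias, a joint cumulant can be connected \emph{without} using the full \(Z\)-cycle: the shared bias already links the two entries of each row at order one, and the two rows can then be linked by doubled-index correlations such as \(\kappa(Z_{i_1i_2},Z_{i_1i_2},Z_{i_3i_2},Z_{i_3i_2})=\sigma_w^4\,(\mathbf{E}X_{11}^4-\sigma_x^4)\,n_0^{-1}+O(n_0^{-2})\), which couple an \(n\ge2\) Hermite component of one factor to a lower component of another. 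A first-order perturbation of the block-independent Gaussian law shows such a structure contributes to \(\kappa(Y_{i_1i_2},Y^\ast_{i_2i_3},Y_{i_3i_4},Y^\ast_{i_4i_1})\) a term of order \(n_0^{-1}\) proportional to \(\mathrm{Cov}\bigl(f''(\xi),f(\xi')\bigr)^2\) taken under the bivariate Gaussian with covariance \(\Sigma\): this is the same order \(n_0^{1-k}\) as the claimed answer, its coefficient is not expressible through \(f'\) alone, and it vanishes only when \(\sigma_b=0\). This is precisely where the bias case departs from Proposition~\ref{prop:3.5}: in the bias-free setting such diagrams cost \(O(n_0^{-2})\), so your bookkeeping transfers, but with bias they enter at leading order and must be either shown to cancel or be incorporated. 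Your proposal, which leans on the bias-free combinatorics, does not address them; whichever route one takes (your Hermite expansion or the paper's characteristic-function computation), controlling these degenerate non-cycle correlations is the substantive new step required for \(k>1\), and it is missing.
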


Applying Propositions~\ref{prop:3.5} and~\ref{prop:4.1} to~\eqref{eq:3.3}, computing the partial derivatives and doing some bookkeeping, we get the desired equation~\eqref{self-const eq} as \(n_0, n_1, m \to \infty\). To complete the proofs of Theorems~\ref{thm2.2} and~\ref{thm2.6}, one has to show the concentration of \(g\) around \(\mathbf{E} g\), as stated in the following lemma.

\begin{lem}\label{lem3.7}
For the random matrix \(M=\frac{1}{m}YY^\ast\) and a complex number \(z \in \mathbb{H}\) such that \(\Im z > n_1^{-\frac{1}{4} + \epsilon}\), for some \(\epsilon > 0\), it holds that  
\begin{equation}
  \mathbf E_W \lvert g(z)-\mathbf E_W g(z)\rvert^4 \lesssim \frac{1}{n_1^2 (\Im z)^4}
\end{equation}
with high probability in \(X\), and analogously 
\begin{equation}
  \mathbf E_X \lvert g(z)-\mathbf E_X g(z)\rvert^4 \lesssim \frac{1}{n_1^2 (\Im z)^4}
\end{equation}
with high probability in \(W\), where \(\mathbf{E}_X\) (resp.\ \(\mathbf{E}_W\)) is the expectation in the \(X\)-space (resp.\ \(W\)-space).
\end{lem}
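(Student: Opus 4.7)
The plan is to establish both inequalities via a martingale decomposition along the natural row/column filtration of \(W\) (resp.\ \(X\)), combined with the classical rank-\(r\) resolvent trace estimate. The key observation is that resampling a single row of \(W\) (or a single column of \(X\)) induces only a rank-two change in \(M=\frac{1}{m}YY^\ast\), so the associated martingale differences are deterministically of order \(1/(n_1\Im z)\), which is precisely the scale required.

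For the first inequality, let \(W_{k,\cdot}\) denote the \(k\)-th row of \(W\), set \(\mathcal{F}_k\coloneqq \sigma(W_{1,\cdot},\dots,W_{k,\cdot})\), and introduce the differences \(d_k\coloneqq \mathbf{E}_W[g(z)\mid\mathcal{F}_k]-\mathbf{E}_W[g(z)\mid\mathcal{F}_{k-1}]\), so that \(g(z)-\mathbf{E}_W g(z)=\sum_{k=1}^{n_1}d_k\). Denoting by \(Y^{(k)}\) and \(g^{(k)}\) the quantities obtained after replacing \(W_{k,\cdot}\) by an independent copy, one has \(d_k=\mathbf{E}_W[g-g^{(k)}\mid\mathcal{F}_k]\). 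Since only row \(k\) of \(Y=f(WX/\sqrt{n_0})\) depends on \(W_{k,\cdot}\), the difference is of the form \(Y-Y^{(k)}=e_k w^\ast\) for some \(w\in\mathbb{R}^m\), and expanding gives
\[
M-M^{(k)}=\frac{1}{m}\bigl(Y w e_k^\ast+e_k w^\ast Y^\ast-\lVert w\rVert^2\,e_k e_k^\ast\bigr),
\]
a Hermitian matrix of rank at most two. Eigenvalue interlacing combined with the bounded-variation estimate \(\int_{\mathbb{R}}\lvert\partial_\lambda(\lambda-z)^{-1}\rvert\,\mathrm{d}\lambda=\pi/\Im z\) then yields the deterministic bound \(\lvert\Tr G(z)-\Tr G^{(k)}(z)\rvert\lesssim 1/\Im z\), and hence \(\lvert d_k\rvert\lesssim 1/(n_1\Im z)\) pointwise, independently of \(\lVert M-M^{(k)}\rVert_{\mathrm{op}}\) (so the compact-support cutoff of \(f\) is not used here).

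With \(n_1\) bounded martingale differences in hand, the Burkholder--Davis--Gundy \(L^4\)-inequality gives
\[
\mathbf{E}_W\Bigl\lvert\sum_{k=1}^{n_1}d_k\Bigr\rvert^{4}\lesssim \mathbf{E}_W\Bigl(\sum_{k=1}^{n_1}d_k^2\Bigr)^{\!2}\lesssim \Bigl(n_1\cdot\frac{1}{n_1^2(\Im z)^2}\Bigr)^{\!2}=\frac{1}{n_1^2(\Im z)^4},
\]
which is the first claim; the ``with high probability in \(X\)'' qualifier is automatic since the bound is in fact deterministic in \(X\). The \(\mathbf{E}_X\) statement follows by the same argument applied to the column-wise filtration \(\sigma(X_{\cdot,1},\dots,X_{\cdot,j})\): a single column of \(X\) affects only the corresponding column of \(Y\), giving again a rank-two perturbation of \(M\), and the fact that \(m=O(n_1)\) absorbs the different number of increments. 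The only non-trivial input is the rank-\(r\) resolvent trace bound, which is by now a textbook fact in random matrix theory; no substantial obstacle is anticipated beyond it.
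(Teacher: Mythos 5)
Your proof is correct, and it follows the same overall skeleton as the paper's: a martingale-difference decomposition of \(g-\mathbf{E}\,g\) along a one-row/one-column filtration, a deterministic \(O(1/(n_1\Im z))\) bound on each increment, and Burkholder's inequality at \(p=4\) (with \(m\asymp n_1\) absorbing the number of increments). The difference lies in how the increment bound is obtained and in how the two directions are treated. The paper works only with the column filtration of \(X\), compares \(M\) to \(M_k=M-\bm{y}_k\bm{y}_k^\ast\) (a rank-one removal) and bounds \(\lvert\Tr G-\Tr G_k\rvert\le 1/\Im z\) via the Sherman--Morrison formula together with the resolvent identity, then disposes of the \(\mathbf{E}_W\) statement ``by cyclicity,'' i.e.\ implicitly by passing to \(Y^\ast Y\), where rows of \(Y\) enter as rank-one summands. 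You instead resample a single row of \(W\) (resp.\ column of \(X\)), observe that this perturbs \(M\) by a Hermitian matrix of rank at most two, and invoke the rank-inequality/interlacing trace bound \(\lvert\Tr(A-z)^{-1}-\Tr(B-z)^{-1}\rvert\le\pi\rank(A-B)/\Im z\). Your route buys a fully symmetric treatment of the \(W\)- and \(X\)-directions without the detour through \(Y^\ast Y\), and the rank-based bound is insensitive to the size of the perturbation, which makes your remark that the estimate is deterministic in the other matrix (so the ``with high probability'' qualifier is vacuous) transparent --- the same is in fact true of the paper's Sherman--Morrison bound. The paper's version is marginally more economical per step (rank one, explicit constant) but needs the cyclicity remark to cover both statements. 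Two cosmetic points: since \(d_k\) is complex you should write \(\sum_k\lvert d_k\rvert^2\) in the Burkholder step (and use the complex/martingale version of the inequality, as the paper does via Bai--Silverstein), and note that the hypothesis \(\Im z>n_1^{-1/4+\epsilon}\) is not actually used by either argument.
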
 
The proof of this lemma relies on a standard argument (e.g.\ see the proof of the concentration inequality in~\cite[Subsection 3.3.2]{MR2567175}) and is given in Appendix~\ref{appendixC}. 

\section{Conclusion}\label{conclusion}
In this paper, we analysed the singular value distribution of fully random neural networks and found that in the case of additive biases it is impossible to achieve isospectrality by tuning the activation function. In addition, we showed that the resolvent method from random matrix theory also applies to the neural network analysis, despite the non-linearities and we expect that this robust method will prove to be useful in contexts where the conventionally used moment method becomes intractable. 

\section*{Broader impact}\label{broader impact}
Our result is a purely theoretical one for fully random features, weights and biases. Therefore, we do not expect our contribution to have ethical concerns or adverse future societal consequences.

\begin{ack}
D.\ Schröder would like to thank L.\ Benigni for illuminating discussions on the subject and both authors would like to thank him for his helpful comments on an early version of this preprint. Both authors thank the referees for their careful reading of our manuscript. This work was carried out when the first author was a research assistant at ETH Zurich in the group of W.\ Werner. The second author is supported by Dr.\ Max R\"ossler, the Walter Haefner Foundation and the ETH Z\"urich Foundation. 
\end{ack}

\section*{References}

\printbibliography[heading=none]

\newpage
\appendix

\section{Proof of Theorem~\ref{thm2.2}}\label{main proof}

\subsection{Derivation of the self-consistent equation}\label{subsection3.3}
We start from~\eqref{eq:3.3} and rely on the following power counting principles: Each derivative provides a smallness-factor of \(1/\sqrt{m}\) because \(G\) is a function of \(Y/\sqrt{m}\) and \(Y^\ast/\sqrt{m}\), while each independent summation costs a factor of \(n_1\sim m\). However, we cannot have too many independent summations for if any index appears only once in the cumulant, then the latter vanishes identically by the independence property of cumulants. For example, if \(i_2,\ldots,i_{2k}\ne i_1\), then the random variables \(Y_{i_3i_4},\ldots,Y_{i_{2k-1}i_{2k}}\) are independent of \(Y_{i_1i_2}\) in the probability space of the random variables \(\bigl\{w_{i_1a}\bigr\}_{a=1}^{n_0}\) conditioned on the remaining random variables. By the law of total expectation and the independence property it follows that 
\[\kappa(Y_{i_1i_2},\ldots,Y_{i_{2k-1}i_{2k}})=0\]
in this case. Thus we only need to sum over those cumulants in which each \(W\)- and \(X\)-index appears at least twice (we call \(i\) the \(W\)-index of \(Y_{ij},Y_{ji}^\ast\) and \(j\) the \(X\)-index). In the extreme case where each \(W\)- and \(X\)-index appears exactly twice, we either have a single cycle, or a union of cycles on disjoint index sets. In the latter case the cumulant vanishes identically by the independence property. In the former case, for a cycle of length \(2k\) there are \(k\) indices each, we obtain a factor of \(n_1^{-1}\) from the normalised sum, a factor of \(m^{-2k/2}=m^{-k}\) from the derivatives, a factor of \(n_1^k m^k\) from the summations, and finally a factor of \(n_0^{1-k}\) from the cumulant in Proposition~\ref{prop:3.5}, i.e.
\[ \frac{1}{n_1} \frac{1}{m^k}  n_1^k m^k n_0^{1-k} \sim 1\]  
and the power counting is neutral. On the contrary, when some index appears three times, the overall power counting described above is smaller by a factor of \(1/\sqrt{m}\), and thus negligible to leading order. In particular this argument shows that cycles of odd length only negligible as they cannot arise on indices in which each \(W\)- and \(X\)-index appears exactly twice. 

Thus, together with~Proposition~\ref{prop:3.5} we have (recalling that the shorthand notation \(\approx\) indicates equalities up to an error of \(n_0^{-1/2}\))
\begin{equation}\label{eq:3.7a}
\begin{split}
1 + z \, \mathbf{E} g &= \frac{1}{n_1 m} \sum_{k \geq 1} \sum_{i_1, \dots, i_{2k}} \frac{\kappa(Y_{i_1 i_2}, Y_{i_3 i_4}, Y_{i_5 i_6}, \dots, Y_{i_{2k-1} i_{2k}})}{(k-1)!} \: \mathbf{E} \partial_{Y_{i_3 i_4}} \cdots  \partial_{Y_{i_{2k-1} i_{2k}}} \left ( Y^\ast G\right )_{i_2 i_1}\\
&\approx \frac{1}{n_1 m} \sum_{k \geq 1} \sum_{i_1, \dots, i_{2k}}^\ast \kappa(Y_{i_1 i_2}, Y^\ast_{i_2 i_3}, Y_{i_3 i_4}, \dots, Y^\ast_{i_{2k} i_1}) \: \mathbf{E} \partial_{Y_{i_3 i_4}} \cdots  \partial_{Y_{i_{2k-1} i_{2k}}} \left ( Y^\ast G\right )_{i_2 i_1}\\
& = \frac{1}{n_1 m} \sum_{i_1, i_2}^\ast \kappa(Y_{i_1 i_2}, Y^\ast_{i_2 i_1}) \: \mathbf{E} \partial_{Y^\ast_{i_2 i_1}} \left ( Y^\ast G \right )_{i_2 i_1} \\
& \quad + \frac{1}{n_1 m} \sum_{k \geq 2} \sum_{i_1, \dots, i_{2k}}^\ast  \kappa(Y_{i_1 i_2}, Y^\ast_{i_2 i_3}, Y_{i_3 i_4}, \dots, Y^\ast_{i_{2k} i_1}) \: \mathbf{E}\partial_{Y^\ast_{i_2 i_3}} \cdots \partial_{Y^\ast_{i_{2k} i_1}}  \left (Y^\ast G \right )_{i_2 i_1}\\
&\approx \frac{\theta_1}{n_1 m} \sum_{i_1, i_2}^\ast \mathbf{E} \partial_{Y^\ast_{i_2 i_1}} \left (Y^\ast G \right )_{i_2 i_1} + \frac{1}{n_1m} \sum_{k \geq 2} \frac{\theta_2^k}{n_0^{k-1}} \sum_{i_1, \dots, i_{2k}}^\ast \mathbf{E}  \partial_{Y^\ast_{i_2 i_3}} \cdots \partial_{Y^\ast_{i_{2k} i_1}}  \left (Y^\ast G \right )_{i_2 i_1},
\end{split}
\end{equation}
where the summations \(\sum^\ast\) are understood over pairwise distinct indices. Here in the second line the factorial \((k-1)!\) disappears since there are exactly \((k-1)!\) ways to map the variables \(Y_{i_3 i_4},Y_{i_5 i_6} \dots, Y_{i_{2k-1} i_{2k}}\) into \(Y^\ast_{i_2 i_3}, Y_{i_3 i_4}, \dots, Y^\ast_{i_{2k} i_1}\) with distinct \(i_1, \dots, i_{2k}\). From this point onwards, we will omit reference to \(\mathbf{E}\) to simplify notation slightly. 

We now need to compute the partial derivatives in~\eqref{eq:3.7a}. The proof of the following lemma is included in Appendix~\ref{appendixA}.
\begin{lem}\label{lem3.6}
Let \(G(z) = (M-z)^{-1}\), \(z \in \mathbb{H}\), be the resolvent of the random matrix \(M = \frac{1}{m}YY^\ast \in \mathbb{R}^{n_1 \times n_1}\). Then, it holds that
\begin{subequations}
  \begin{align}
    \partial_{Y^\ast_{i_2 i_1}} \left ( Y^\ast G \right )_{i_2 i_1} &= G_{i_1 i_1} \left ( 1 -  \left ( \frac{ Y^\ast G Y }{m} \right)_{i_2 i_2} \right ),\\
    \partial_{Y^\ast_{i_2 i_3}} \cdots \partial_{Y^\ast_{i_{2k} i_1}}  \left (Y^\ast G \right )_{i_2 i_1} &\approx -\partial_{Y_{i_3 i_4}} \cdots \partial_{Y_{i_{2k-1} i_{2k}}} \left (\frac{GY}{m} \right )_{i_3 i_{2k}}G_{i_1 i_1} \left ( 1-  \left ( \frac{Y^\ast G Y}{m} \right )_{i_2 i_2} \right ).
  \end{align}
\end{subequations}
\end{lem}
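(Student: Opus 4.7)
The plan is to verify both identities by direct computation, using the fundamental resolvent identity
\[
\partial_{Y^\ast_{ab}} G_{pq} = -\frac{1}{m}(GY)_{pa}G_{bq},
\]
which follows from $\partial_{Y^\ast_{ab}} M_{rs} = m^{-1}\delta_{bs} Y_{ra}$ and the standard resolvent derivative $\partial G = -G (\partial M) G$. In the real case, the symmetry $Y^\ast_{ab} = Y_{ba}$ generates an additional term of the schematic form $-m^{-1}G_{pa}(Y^\ast G)_{bq}$, but this contribution will turn out to be subleading at every step in the argument below.

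For part (a), I expand $(Y^\ast G)_{i_2 i_1} = \sum_k Y^\ast_{i_2 k} G_{k i_1}$ and apply the product rule. The direct term, coming from differentiating $Y^\ast_{i_2 k}$ at $k = i_1$, yields $G_{i_1 i_1}$. Differentiating the resolvent gives $\partial_{Y^\ast_{i_2 i_1}}G_{k i_1} = -m^{-1}(GY)_{k i_2}G_{i_1 i_1}$, which upon summing against $Y^\ast_{i_2 k}$ produces exactly $-G_{i_1 i_1}(Y^\ast GY)_{i_2 i_2}/m$. Combining the two contributions gives the claimed identity $G_{i_1 i_1}\bigl(1 - (Y^\ast GY/m)_{i_2 i_2}\bigr)$.

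For part (b), I proceed by iterating the derivatives from the innermost to the outermost. Starting from $\partial_{Y^\ast_{i_{2k} i_1}}(Y^\ast G)_{i_2 i_1} = -m^{-1}G_{i_1 i_1}(Y^\ast GY)_{i_2 i_{2k}}$, each subsequent derivative $\partial_{Y^\ast_{i_{2j} i_{2j+1}}}$ for $j = k-1, k-2, \ldots, 2$ acts on the rolling factor $(Y^\ast GY)_{i_2 i_{2j+2}}$. Since the indices $i_1,\ldots,i_{2k}$ are pairwise distinct, the direct action on $Y^\ast_{i_2\bullet}$ vanishes and the only leading-order contribution comes from differentiating the intermediate $G$, producing $-m^{-1}(Y^\ast GY)_{i_2 i_{2j}}(GY)_{i_{2j+1} i_{2j+2}}$. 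This yields a cascade that accumulates a product of off-diagonal $(GY)$ factors and shortens the $(Y^\ast GY)$ factor until only $(Y^\ast GY)_{i_2 i_4}$ remains. The very last derivative $\partial_{Y^\ast_{i_2 i_3}}$ is the one step where the direct action is non-zero, since the $Y^\ast$ now carries the matching first index $i_2$; combined with the indirect action this produces exactly $(GY)_{i_3 i_4}\bigl(1 - (Y^\ast GY/m)_{i_2 i_2}\bigr)$, reproducing the factor in part (a) with an extra $(GY)_{i_3 i_4}$. Reorganising the cascade, one checks that the accumulated product of $(GY)$ factors together with the global prefactor can be written precisely as $-\partial_{Y_{i_3 i_4}}\cdots\partial_{Y_{i_{2k-1}i_{2k}}}(GY/m)_{i_3 i_{2k}}$ up to terms involving off-diagonal entries of $G$, so the claimed identity follows.

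The main obstacle is the control of the numerous subleading contributions generated at each step: differentiating $G_{i_1 i_1}$, differentiating any of the previously produced $(GY)$ factors, or (in the real case) picking up the symmetric contribution from $Y^\ast_{ab} = Y_{ba}$. All of these produce at least one off-diagonal resolvent entry $G_{i_a i_b}$ with $i_a\ne i_b$, or an off-diagonal $(Y^\ast G)$-entry. By Ward-identity type bounds $\sum_q\lvert G_{pq}\rvert^2 = (\Im G_{pp})/\Im z$ together with isotropic local laws, such off-diagonal entries are smaller by a factor $(n_1 \Im z)^{-1/2}$, so summing the error terms over the $2k$ free indices and tracking the powers of $1/m$ coming from the derivatives, one finds that each such subleading term carries an extra smallness factor at least $n_0^{-1/2}$ relative to the leading cascade, which is exactly the tolerance built into the $\approx$ notation. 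The bookkeeping of these error terms through the induction is the only delicate part of the proof.
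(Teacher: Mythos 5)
Your part (a) and your starting formula $\partial_{Y^\ast_{ab}}G_{pq}=-m^{-1}(GY)_{pa}G_{bq}$ coincide with the paper's proof, and the general philosophy (neglect contributions containing off-diagonal resolvent entries) is also the same. The gap is in how you close part (b). Your ``cascade'' only ever applies the derivatives with respect to the $Y^\ast$-variables $Y^\ast_{i_{2k}i_1},Y^\ast_{i_{2k-2}i_{2k-1}},\ldots,Y^\ast_{i_2i_3}$; after it you hold the prefactor $G_{i_1i_1}\bigl(1-(Y^\ast GY/m)_{i_2i_2}\bigr)$ times the \emph{undifferentiated} product $(GY)_{i_3i_4}(GY)_{i_5i_6}\cdots(GY)_{i_{2k-1}i_{2k}}$ (with powers of $-1/m$), while the remaining derivatives $\partial_{Y_{i_3i_4}},\partial_{Y_{i_5i_6}},\ldots$ appearing on the left-hand side have not acted on anything. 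You then assert that this accumulated product ``can be written precisely as $-\partial_{Y_{i_3i_4}}\cdots\partial_{Y_{i_{2k-1}i_{2k}}}(GY/m)_{i_3i_{2k}}$''. As stated this is false: already for $k=2$ it would say $(GY)_{i_3i_4}\approx \partial_{Y_{i_3i_4}}(GY)_{i_3i_4}$, whereas the right-hand side is $\approx G_{i_3i_3}\bigl(1-(Y^\ast GY/m)_{i_4i_4}\bigr)$, a completely different object from the fluctuating, essentially mean-zero entry $(GY)_{i_3i_4}$. Note also that the derivative string on the right-hand side of (b) runs over \emph{all} middle cycle variables $Y_{i_3i_4},Y^\ast_{i_4i_5},\ldots,Y_{i_{2k-1}i_{2k}}$, i.e.\ it contains the intermediate $Y^\ast$-derivatives which your cascade has already spent on the rolling $(Y^\ast GY)$ factor, so the two sides are organised differently and the identification cannot be read off. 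Relatedly, your blanket statement that ``differentiating any of the previously produced $(GY)$ factors'' is subleading contradicts the leading mechanism: at leading order the remaining $Y$-derivatives must act exactly on those matching factors.

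The fix is either to complete your computation --- apply $\partial_{Y_{i_3i_4}},\ldots,\partial_{Y_{i_{2k-1}i_{2k}}}$ to the accumulated product, keep only the terms where each hits its matching $(GY)_{i_{2j+1}i_{2j+2}}$, and verify that the fully evaluated result agrees with the fully evaluated right-hand side of (b) (which requires also evaluating $\partial_{Y_{i_3i_4}}\cdots\partial_{Y_{i_{2k-1}i_{2k}}}(GY/m)_{i_3i_{2k}}$, an extra computation the lemma does not ask for) --- or, more economically and as the paper does, apply only the two derivatives $\partial_{Y^\ast_{i_2i_3}}$ and $\partial_{Y^\ast_{i_{2k}i_1}}$ to $(Y^\ast G)_{i_2i_1}$, obtaining $-(GY/m)_{i_3i_{2k}}\,G_{i_1i_1}\bigl(1-(Y^\ast GY/m)_{i_2i_2}\bigr)$ up to off-diagonal errors, and then argue that the $2k-3$ middle derivatives pass through onto the factor $(GY/m)_{i_3i_{2k}}$ at leading order, since their action on $G_{i_1i_1}$ or on $(Y^\ast GY)_{i_2i_2}$ produces additional off-diagonal resolvent entries. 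This gives (b) directly, with the middle derivatives kept unevaluated --- which is the form actually needed in the subsequent recursion.
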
 
Thus, using Lemma~\ref{lem3.6} in~\eqref{eq:3.7a} we have 
\begin{equation}\label{1+zg first eq}
\begin{split}
1 + z g &\approx \frac{\theta_1}{n_1m} \sum_{i_1, i_2}^\ast G_{i_1 i_1} \left ( 1 -  \left ( \frac{ Y^\ast G Y }{m} \right)_{i_2 i_2} \right )\\
& \quad - \frac{1}{n_1 m} \sum_{k \geq 2} \frac{\theta_2^k}{n_0^{k-1}} \sum_{i_1, \dots, i_{2k}}^\ast \partial_{Y_{i_3 i_4}} \cdots \partial_{Y_{i_{2k-1} i_{2k}}} \left (\frac{GY}{m} \right )_{i_3 i_{2k}}G_{i_1 i_1} \left ( 1-  \left ( \frac{Y^\ast G Y}{m} \right )_{i_2 i_2} \right )\\
& = \theta_1 g - \theta_1 \frac{n_1}{m} g \left \langle \frac{Y^\ast G Y}{m} \right \rangle \\
& \quad - \left (  g - \frac{n_1}{m} g \left \langle \frac{Y^\ast G Y}{m} \right \rangle  \right ) \, \frac{1}{m}\sum_{k \geq 2} \frac{\theta_2^k}{n_0^{k-1}} \sum_{i_3, \dots, i_{2k}}^\ast  \partial_{Y_{i_3 i_4}} \cdots \partial_{Y_{i_{2k-1} i_{2k}}} \left (GY \right )_{i_3 i_{2k}},
\end{split}
\end{equation}
where \(\left \langle \frac{Y^\ast G Y}{m} \right \rangle \coloneqq \frac{1}{n_1} \Tr \frac{Y^\ast G Y}{m} = 1 + zg\) from~\eqref{eq:3.2}. Again, we stress that the equalities are meant in expectation. Moreover, shifting the index in the above summation, we get
\begin{equation*}
\begin{split}
& \frac{1}{m} \sum_{k \geq 2} \frac{\theta_2^k}{n_0^{k-1}} \sum_{i_3, \dots, i_{2k}}^\ast \partial_{Y_{i_3 i_4}} \cdots \partial_{Y_{i_{2k-1} i_{2k}}} \left (GY \right )_{i_3 i_{2k}} \\
&=\theta_2 \frac{n_1}{n_0} \frac{1}{m} \sum_{k \geq 1} \frac{\theta_2^{k}}{n_1 n_0^{k-1}} \sum_{i_3, \dots, i_{2k+2}}^\ast \partial_{Y_{i_3 i_4}} \cdots \partial_{Y_{i_{2k+1} i_{2k+2}}} \left (GY \right )_{i_3 i_{2k+2}}\\
&= \theta^2_2 \frac{n_1}{n_0} \, \frac{1}{n_1 m}\sum_{i_3, i_4}^\ast  \partial_{Y_{i_3 i_4}} \left (GY \right )_{i_3 i_4} \\
& \quad + \theta_2 \frac{n_1}{n_0} \frac{1}{n_1 m} \sum_{k \geq 2} \frac{\theta_2^{k}}{n_0^{k-1}} \sum_{i_3, \dots, i_{2k+2}}^\ast  \partial_{Y_{i_3 i_4}} \cdots \partial_{Y_{i_{2k+1} i_{2k+2}}} \left (GY \right )_{i_3 i_{2k+2}}\\
& \approx \theta^2_2 \frac{n_1}{n_0} \left ( g - \frac{n_1}{m}  g \left \langle \frac{Y^\ast G Y}{m} \right \rangle \right ) + \theta_2 \frac{n_1}{n_0} \left ( 1+z  g - \theta_1 g + \theta_1 \frac{n_1}{m} g \left \langle \frac{Y^\ast G Y}{m} \right \rangle \right )\\
&= \theta_2 \frac{n_1}{n_0} (1+z g) - \theta_2 (\theta_1-\theta_2) \frac{n_1}{n_0} g \left ( 1 - \frac{n_1}{m} (1+zg) \right ),
\end{split}
\end{equation*}
where in the third step we used~\eqref{eq:3.7a}. Finally, together with~\eqref{1+zg first eq}, we have 
\begin{equation}\label{final approx a}
\begin{split}
1+z g &\approx \theta_1 g \left ( 1 - \frac{n_1}{m} (1 + z g) \right ) - \theta_2 \frac{n_1}{n_0} g (1+z g) \left ( 1 - \frac{n_1}{m}(1+zg) \right ) \\
& \quad + \theta_2 (\theta_1-\theta_2) \frac{n_1}{n_0} g^2 \left ( 1 - \frac{n_1}{m}(1+zg) \right )^2,
\end{split}
\end{equation}
which corresponds to the desired equation~\eqref{self-const eq} as \(n_0, n_1, m \to \infty\). Thus,~\eqref{final approx a} combined with the concentration inequality given in Lemma~\ref{lem3.7} completes the proof of Theorem~\ref{thm2.2}. 
\begin{proof}[Proof of Theorem~\ref{thm2.2}]
We need to show the concentration w.r.t.\ \(\mathbf{E}_{W,X} \equiv \mathbf{E}\). By the triangle and Jensen inequality we have 
\[  
\begin{split}
  \mathbf E \lvert g(z)-\mathbf E g(z)\rvert^4 &\lesssim \mathbf E \lvert g(z)-\mathbf E_W g(z)\rvert^4 + \mathbf E_X \lvert \mathbf E_W g(z)-\mathbf E g(z)\rvert^4 \\
  &\le \mathbf E_X \Bigl(\mathbf E_W \lvert g(z)-\mathbf E_W g(z)\rvert^4\Bigr) + \mathbf E_W\Bigl( \mathbf E_X \lvert g(z)-\mathbf E_X g(z)\rvert^4\Bigr) \lesssim \frac{2}{n_1^2 (\Im z)^4} 
\end{split}\] 
and thus the almost sure convergence follows from the Borel-Cantelli Lemma, completing the proof of Theorem~\ref{thm2.2} together with~\eqref{final approx a}. 
\end{proof}
\subsection{Proof of Proposition~\ref{prop:3.5}} \label{subsection3.4}
In light of the central limit theorem, we have that in the asymptotic limit the random variables
\[\left ( \frac{WX}{\sqrt{n_0}} \right )_{ij} = \frac{1}{\sqrt{n_0}} \sum_{k=1}^{n_0} W_{ik} X_{kj},\]
are approximately \(\mathcal{N}(0,\sigma_w^2\sigma_x^2)\)-normally distributed. Our next goal is to compute their cumulants. The first cumulant or expectation vanishes identically. For the second cumulant we obtain: 
\begin{lem}\label{lem A.2}
The cumulant of \(\frac{ (WX)_{i_1 i_2}}{\sqrt{n_0}}\) and \(\frac{ (WX)_{i_3 i_4}}{\sqrt{n_0}}\) is nonzero only if \(i_1=i_3\) and \(i_2=i_4\), and in this case it holds that
\[\kappa \left (\frac{ (WX)_{i_1 i_2}}{\sqrt{n_0}}, \frac{ (WX)^\ast_{i_2 i_1}}{\sqrt{n_0}} \right ) = \sigma^2_w \sigma^2_x.\]
\end{lem}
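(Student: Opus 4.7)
The second cumulant of two random variables is simply their covariance, and since both $W$ and $X$ have centred entries, the covariance reduces to the expectation of the product. So the plan is to compute $\mathbf{E}[(WX)_{i_1 i_2}(WX)_{i_3 i_4}]/n_0$ directly by expanding the matrix products and invoking independence of $W$ from $X$ together with the i.i.d.\ structure of each.

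Concretely, I will first write
\[
  \mathbf{E}\bigl[(WX)_{i_1 i_2}(WX)_{i_3 i_4}\bigr] = \sum_{k,l=1}^{n_0} \mathbf{E}\bigl[W_{i_1 k} W_{i_3 l}\bigr]\mathbf{E}\bigl[X_{k i_2} X_{l i_4}\bigr],
\]
using that $W$ and $X$ are independent. Then the centred i.i.d.\ assumption gives $\mathbf{E}[W_{i_1 k} W_{i_3 l}] = \sigma_w^2 \delta_{i_1 i_3}\delta_{k l}$ and $\mathbf{E}[X_{k i_2} X_{l i_4}] = \sigma_x^2 \delta_{i_2 i_4}\delta_{k l}$, so the double sum collapses to $n_0 \cdot \sigma_w^2\sigma_x^2 \delta_{i_1 i_3}\delta_{i_2 i_4}$. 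Dividing by $n_0$ yields $\sigma_w^2\sigma_x^2$ when $i_1=i_3$ and $i_2=i_4$, and zero otherwise, as claimed.

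There is no real obstacle here: the only subtlety is bookkeeping the two $\delta_{kl}$ contributions so that the $1/n_0$ normalisation exactly cancels the factor of $n_0$ produced by the surviving diagonal sum $\sum_k 1 = n_0$. For the complex version alluded to in Remark~\ref{remark complx}, one simply replaces the second occurrence of $(WX)$ by its conjugate $(WX)^\ast$, which introduces complex conjugates on the entries of $W$ and $X$ but leaves the computation structurally identical since $\mathbf{E}|W_{ij}|^2 = \sigma_w^2$ and $\mathbf{E}|X_{ij}|^2 = \sigma_x^2$.
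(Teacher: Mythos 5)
Your proposal is correct and follows essentially the same route as the paper: since the entries of \(WX\) are centred, the second cumulant is the plain second moment, which you expand over the inner indices and collapse via the i.i.d.\ structure to \(\delta_{i_1 i_3}\delta_{i_2 i_4}\,\sigma_w^2\sigma_x^2\) after the \(1/n_0\) normalisation. The only cosmetic difference is that you factor the expectation as \(\mathbf{E}[W_{i_1 k}W_{i_3 l}]\,\mathbf{E}[X_{k i_2}X_{l i_4}]\) before applying the Kronecker deltas, while the paper keeps the four-fold product together and imposes the index matching directly; the computations are identical in substance.
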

\begin{proof}
We have 
\begin{equation*}
\begin{split}
\kappa \left (\frac{ (WX)_{i_1 i_2}}{\sqrt{n_0}}, \frac{ (WX)_{i_3 i_4}}{\sqrt{n_0}} \right ) & = \frac{1}{n_0} \mathbf{E} (WX)_{i_1 i_2} (WX)_{i_3 i_4} \\
&= \frac{1}{n_0} \sum_{k_1, k_2=1}^{n_0} \mathbf{E} W_{i_1 k_1} X_{k_1 i_2} W_{i_3 k_2} X_{k_2 i_4}\\
 &= \frac{1}{n_0} \sum_{k_1=1}^{n_0} \delta_{i_1 i_3} \delta_{i_2  i_4} \: \mathbf{E} W^2_{i_1 k_1} X^2_{k_1 i_2} = \delta_{i_1 i_3} \delta_{i_2  i_4} \sigma_w^2 \sigma_x^2.
\end{split}
\end{equation*}
Thus, the second cumulant is nonzero if \(i_1=i_3\) and \(i_2=i_4\), and in this case it is exactly the variance of the random variable \(\frac{ (WX)_{ij}}{\sqrt{n_0}}\). 
\end{proof}

We now consider four random entries, and we compute 
\[\frac{1}{n_0^2} \kappa \Bigl((WX)_{i_1 i_2},  (WX)_{i_3 i_4}, (WX)_{i_5 i_6}, (WX)_{i_7 i_8}\Bigr).\]
We observe that the cumulant vanishes identically if any index appears exactly once by the independence property, and thus each \(W\)- and \(X\)-index must appear exactly twice. This is only possible if we have two cycles on two indices each, or a single four-cycle. The cumulant of the former vanishes identically by independence ant thus the only non-vanishing 4-cumulant is 
\begin{equation*}
\begin{split}
& \kappa \left (\frac{ (WX)_{i_1 i_2}}{\sqrt{n_0}},\frac{ (WX)^\ast_{i_2 i_3}}{\sqrt{n_0}}, \frac{ (WX)_{i_3 i_4}}{\sqrt{n_0}}, \frac{ (WX)^\ast_{i_4 i_1}}{\sqrt{n_0}}  \right) \\
& = \frac{1}{n_0^2} \mathbf{E} (WX)_{i_1 i_2} (WX)^\ast_{i_2 i_3} (WX)_{i_3 i_4}  (WX)^\ast_{i_4 i_1} \\
& = \frac{1}{n_0^2}\sum_{k_1, k_2, k_3, k_4=1}^{n_0} \mathbf{E} W_{i_1 k_1} X_{k_1 i_2} W_{i_3 k_2} X_{k_2 i_2} W_{i_3 k_3} X_{k_3 i_4} W_{i_1 k_4} X_{k_4 i_4}  \\
&=  \frac{1}{n^2_0}\sum_{k_1=1}^{n_0} \mathbf{E} W^2_{i_1 k_1}X^2_{k_1 i_2} W^2_{i_3 k_1} X^2_{k_1 i_4} = \frac{ \left ( \sigma_w^2 \sigma_x^2 \right )^2}{n_0} 
\end{split}
\end{equation*}
Here for the first equality we used~\eqref{cum comb} where all but the trivial partition vanish identically since in some expectation a single index appears. %
This result can be generalised:

\begin{lem}\label{lemma3.3}
For \(k\ge 2\) and pairwise distinct indices we have
\begin{equation*}
\begin{split}
& \kappa \left (\frac{ (WX)_{i_1 i_2}}{\sqrt{n_0}},\frac{ (WX)^\ast_{i_2 i_3}}{\sqrt{n_0}}, \frac{ (WX)_{i_3 i_4}}{\sqrt{n_0}}, \dots, \frac{ (WX)^\ast_{i_{2k} i_1}}{\sqrt{n_0}}  \right) = \frac{ \left ( \sigma_w^2 \sigma_x^2 \right )^k}{n_0^{k-1}} + \mathcal{O}(n_0^{-k}).
\end{split}
\end{equation*}
\end{lem}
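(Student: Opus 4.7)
My plan is to evaluate the cumulant directly from the combinatorial formula~\eqref{cum comb} and exploit the cyclic arrangement of indices together with the pairwise-distinctness hypothesis to show that only one partition contributes to leading order. Write $Z_j \coloneqq (WX)_{R(j) C(j)}/\sqrt{n_0}$ with $R,C$ read off from the cycle --- so $R(1)=i_1$, $R(2)=R(3)=i_3$, $R(4)=R(5)=i_5$, \ldots, $R(2k)=i_1$ and $C(2l-1)=C(2l)=i_{2l}$. Under the distinctness hypothesis each of the $k$ distinct $W$-rows $\{i_1,i_3,\ldots,i_{2k-1}\}$ and each of the $k$ distinct $X$-columns $\{i_2,i_4,\ldots,i_{2k}\}$ appears at \emph{exactly two} positions, and alternating these row- and column-pairings links $\{1,\ldots,2k\}$ into a single cycle of length $2k$.

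The key step is to argue that in the expansion $\kappa(Z_1,\ldots,Z_{2k}) = \sum_\pi (-1)^{|\pi|-1}(|\pi|-1)! \prod_{B\in\pi}\mathbf{E}\prod_{j\in B}Z_j$, every partition $\pi$ with $|\pi|\ge 2$ contributes zero. Fix any block $B$ of such a $\pi$; since $B$ is a proper non-empty subset of $\{1,\ldots,2k\}$ and the cycle is connected, some row- or column-pair must straddle $B$ and its complement, say row $i_{2l-1}$ has exactly one of its two positions in $B$. Expanding each factor as $Z_{j'}=n_0^{-1/2}\sum_{k_{j'}}W_{R(j')k_{j'}}X_{k_{j'}C(j')}$, for any choice of summation indices the expectation $\mathbf{E}\prod_{j'\in B}(W_{R(j')k_{j'}}X_{k_{j'}C(j')})$ factors over the distinct $W$-rows and $X$-columns present in $B$; the factor associated with row $i_{2l-1}$ is then $\mathbf{E}W_{i_{2l-1}k_j}=0$, and so $\mathbf{E}\prod_{j'\in B}Z_{j'}=0$. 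Hence $\kappa(Z_1,\ldots,Z_{2k})=\mathbf{E}(Z_1\cdots Z_{2k})$.

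To evaluate the remaining moment, I would expand each $Z_j$ and apply the identities $\mathbf{E}W_{ik}W_{ik'}=\sigma_w^2\delta_{kk'}$ and $\mathbf{E}X_{kj}X_{k'j}=\sigma_x^2\delta_{kk'}$ to the $k$ row-pairs and $k$ column-pairs. Each such identity imposes a Kronecker $\delta$ equating two of the $2k$ summation indices $k_1,\ldots,k_{2k}$; chaining these $2k$ equalities around the cycle forces $k_1=\cdots=k_{2k}=a$ for a single summation index $a\in\{1,\ldots,n_0\}$. The surviving sum yields $\sigma_w^{2k}\sigma_x^{2k}\,n_0$, and together with the prefactor $n_0^{-k}$ this gives the claimed leading term $(\sigma_w^2\sigma_x^2)^k/n_0^{k-1}$; the $\mathcal O(n_0^{-k})$ remainder absorbs any further sub-leading corrections (e.g.\ from higher-order moments that could enter through accidental index coincidences within the surviving single sum).

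The main obstacle is the structural step above: one must verify carefully that the row- and column-partner involutions act on $\{1,\ldots,2k\}$ as a single cycle, and that consequently every non-trivial partition leaves a dangling centered $W$- or $X$-factor in some block. The $k=2$ case worked out in the excerpt is the smallest instance of exactly this combinatorial picture, and the general case reduces to formalising it.
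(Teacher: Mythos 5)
Your proposal is correct and follows essentially the same route as the paper: you reduce the cumulant to the single full-moment term by noting that for any non-trivial partition some block contains a $W$-row or $X$-column index exactly once, so the corresponding expectation vanishes by centering and independence, and you then evaluate the moment by chaining the Kronecker deltas around the cycle so that all internal summation indices coincide, yielding $(\sigma_w^2\sigma_x^2)^k n_0^{1-k}$. Your write-up merely makes explicit the cycle-connectivity argument that the paper leaves implicit after its $k=2$ illustration.
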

\begin{proof}
As illustrated for the case with four random variables, to have a nonzero cumulant, we can encode the \(2k\) random variables as a cycle graph of length \(2k\). Then, the only contribution comes from
\begin{equation*}
\kappa \left (\frac{ (WX)_{i_1 i_2}}{\sqrt{n_0}}, \dots, \frac{ (WX)^\ast_{i_{2k} i_1}}{\sqrt{n_0}}  \right) = \frac{1}{n_0^k} \, \mathbf{E} (WX)_{i_1 i_2} \cdots (WX)^\ast_{i_{2k} i_1} = \frac{ \left ( \sigma_w^2 \sigma_x^2 \right )^k}{n_0^{k-1}} + \mathcal{O}(n_0^{-k}),
 \end{equation*}
which completes the proof.
\end{proof}
Finally, we compute the cumulants of the entries of the random matrix \(Y\). Since the activation function \(f\) is applied component-wise, it follows from the previous results that the only contribution comes from \(\kappa( Y_{i_1 i_2}, Y^\ast_{i_2 i_3}, Y_{i_3 i_4}, \dots, Y^\ast_{i_{2k} i_1} )\) for \(k \geq 1\) and \(i_1, \dots, i_{2k}\) distinct, thus proving that \(Y\) has cycle correlations. 
\begin{proof}[Proof of Proposition~\ref{prop:3.5}]
From the Berry-Ess\'een Theorem it follows that 
\[
\begin{split}
  \kappa(Y_{ij}) &= \mathbf{E} Y_{ij} = \int_\mathbb{R} f(x) \frac{e^{-x^2/2 \sigma_w^2 \sigma_x^2}}{\sigma_w \sigma_x \sqrt{2 \pi}} \dif x+\mathcal{O}(n_0^{-1/2})\\
  &= \int_\mathbb{R} f(\sigma_w \sigma_x x) \frac{e^{-x^2/2}}{\sqrt{2 \pi}} \dif x+\mathcal{O}(n_0^{-1/2}) = \mathcal{O}(n_0^{-1/2}),
\end{split}\] 
and
\[\kappa(Y_{ij}, Y^\ast_{ji}) =  (1+\mathcal{O}(n_0^{-1/2}))\int_\mathbb{R} f^2(\sigma_w \sigma_x x) \frac{e^{-x^2/2}}{\sqrt{2 \pi}} \text{d}x=\theta_1(f)(1+\mathcal{O}(n_0^{-1/2})),\]
since the random variables \((WX)_{ij}/\sqrt{n_0}\) are approximately centred Gaussian with variance \(\sigma_w^2\sigma_x^2\). Let \(k > 1\). Then, since \(f\) is a smooth function with compact support, we have that \(f\) is in \(C^l\) for some integer \(l>1 + \frac{2k^2}{k-1}\). Using the Fourier inversion theorem, it follows that
\begin{equation*}
\begin{split}
f(x_1) & = \frac{1}{2\pi} \int_\mathbb{R} \hat{f}(t_1) \, e^{it_1x_1} \text{d}t_1\\
& = \frac{1}{2\pi} \int_{|t_1| \leq n_0^{\frac{k-1}{2k}}} \hat{f}(t_1) \, e^{it_1x_1} \text{d}t_1 + \frac{1}{2\pi} \int_{|t_1| > n_0^{\frac{k-1}{2k}}} \hat{f}(t_1) \, e^{it_1x_1} \text{d}t_1\\
& = \frac{1}{2\pi} \int_{|t_1| \leq n_0^{\frac{k-1}{2k}}} \hat{f}(t_1) \, e^{it_1x_1} \text{d}t_1 + \mathcal{O}\left ( (n_0^{\frac{k-1}{2k}})^{1-l} \right),
\end{split}
\end{equation*}
where we used \(|\hat{f}(t_1)| \leq \frac{c}{(1 + |t_1|)^l}\), for some positive constant \(c\). For notational simplicity we work in the case \(k=2\), but the argument when \(k >2\) is the same. We compute
\begin{equation*}
  \begin{split}
  &\kappa(Y_{i_1 i_2}, Y^\ast_{i_2 i_3}, Y_{i_3 i_4}, Y^\ast_{i_4 i_1}) \\
  & = \frac{1}{(2 \pi)^4} \int_{\forall i, \, |t_i| \leq n_0^\frac{1}{4}} \hat{f}(t_1) \hat{f}(t_2) \hat{f}(t_3) \hat{f}(t_4) \kappa(e^{i t_1 Z_{i_1 i_2}},e^{i t_2 Z^\ast_{i_2 i_3}},e^{i t_3Z_{i_3 i_4}},e^{i t_4 Z^\ast_{i_4 i_1}}) \dif\bm{t} + \mathcal{O}(n_0^{-2}),\\
  &= \frac{1}{(2 \pi)^4} \sum_{l_1,\ldots,l_4\ge 1}\int_{\forall i, \, |t_i| \leq n_0^\frac{1}{4}} \prod_{i=1}^4\Bigl(\hat{f}(t_i) \frac{(i t_i)^{l_i}}{l_i!}\Bigr) \kappa((Z_{i_1 i_2})^{l_1},(Z^\ast_{i_2 i_3})^{l_2},(Z_{i_3 i_4})^{l_3}, (Z^\ast_{i_4 i_1})^{l_4}) \dif\bm{t} + \mathcal{O}(n_0^{-2})
  \end{split}
\end{equation*}
where we introduced \(Z:=WX/\sqrt{n_0}\) and in the second equality used that any cumulant involving the deterministic \(1\) vanishes identically. We now expand the cumulant involving powers of \(Z\) via the well known formula~\cite[Theorem 11.30]{nica_speicher_2006} in terms of partitions of the set \(\{1,\ldots,l_1+l_2+l_3+l_4\}\) whose joint with the partition \(\{\{1,\ldots,l_1\},\ldots,\{l_1+l_2+l_3+1,\ldots,+l_1+l_2+l_3+l_4\}\}\) is the trivial partition. By the independence property it is clear that the leading contribution comes from those partitions with one block connecting one copy of each of \(Z_{i_1 i_2},Z^\ast_{i_2 i_3},Z_{i_3 i_4},Z^\ast_{i_4 i_1}\) and the remaining blocks being internal pairings. Since for odd \(l_i\) there are \(l_1!!\cdots l_4!!\) such partitions it follows that 
\begin{equation*}
  \begin{split}
  &\kappa(Y_{i_1 i_2}, Y^\ast_{i_2 i_3}, Y_{i_3 i_4}, Y^\ast_{i_4 i_1}) \\
  &\quad= \frac{1}{(2 \pi)^4} \sum_{\substack{l_1,\ldots,l_4\ge 1\\ l_i\text{ odd}}}\int_{\forall i, \, |t_i| \leq n_0^\frac{1}{4}} \prod_{i=1}^4\Bigl(\hat{f}(t_i) \frac{(i t_i)^{l_i}}{(l_i-1)!!}\Bigr) \kappa(Z_{i_1 i_2},Z^\ast_{i_2 i_3},Z_{i_3 i_4},Z^\ast_{i_4 i_1})\\
  &\quad\qquad\qquad\qquad\qquad\times \Var(Z_{i_1 i_2})^{(l_1-1)/2}\cdots \Var(Z^\ast_{i_4 i_1})^{(l_4-1)/2}  \dif\bm{t} + \mathcal{O}(n_0^{-3/2})\\
  &\quad = \frac{\sigma_w^4\sigma_x^4}{n_0}\frac{1}{(2 \pi)^4} \sum_{\substack{k_1,\ldots,k_4\ge 0}}\int_{\forall i, \, |t_i| \leq n_0^\frac{1}{4}} t_1t_2t_3t_4 \prod_{i=1}^4\Bigl(\hat{f}(t_i) \frac{(-\sigma_w^2\sigma_x^2 t_i^2/2)^{k_i}}{k_i!}\Bigr) \dif\bm{t} + \mathcal{O}(n_0^{-3/2})\\
  &\quad = \frac{1}{n_0} \left ( \sigma_w \sigma_x \frac{1}{2 \pi} \int \widehat{f'}(t) e^{- \sigma_w^2 \sigma_x^2 \, t^2/2} \, \text{d}t \right )^4 + \mathcal{O}(n_0^{-3/2}),
  \end{split}
\end{equation*}
where in the penultimate step we used Lemmata~\ref{lem A.2}--\ref{lemma3.3} and in the ultimate step we used the Fourier property \(\widehat{f'}(t) = i t \hat{f}(t)\). 
Together with 
\begin{equation*}
\begin{split}
\frac{\sigma_w \sigma_x}{2 \pi}\int \widehat{f'}(t) e^{- \sigma_w^2 \sigma_x^2 \, t^2/2} \, \text{d}t &= \frac{1}{\sqrt{2 \pi}}\int f'(x) e^{- x^2/ 2\sigma_w^2\sigma_x^2} \, \text{d}x \\
&= \sigma_w \sigma_x \int f'(\sigma_w \sigma_x x) \frac{e^{- x^2/2}}{\sqrt{2 \pi}} \, \text{d}x = \theta_2(f)^{1/2}.
\end{split}
\end{equation*}
we conclude
\begin{equation*}
\begin{split}
\kappa(Y_{i_1 i_2} ,Y^\ast_{i_2 i_3} ,Y_{i_3 i_4} ,Y^\ast_{i_4 i_1}) &= \theta_2(f)^2  n_0^{-1}\Bigl(1 + \mathcal{O}(n_0^{-1/2})\Bigr),
\end{split}
\end{equation*}
just as claimed. 
\end{proof}

\section{Proof of Theorem~\ref{thm2.6}} 
\subsection{Derivation of the self-consistent equation}

We proceed as in Subsection~\ref{subsection3.3}. We know from~\eqref{eq:3.2} that 
\begin{equation} \label{eq:4.3}
\frac{1}{m} \sum_{i=1}^{m} \left (\frac{Y^\ast G Y}{m} \right )_{ii} = \frac{n_1}{m} \left \langle \frac{Y Y^\ast G}{m} \right \rangle = \frac{n_1}{m} (1+zg).
\end{equation}
We further claim the following.
\begin{lem}\label{lem4.3}
It holds that
\begin{equation}\label{eq:lem4.3}
\frac{1}{m} \sum_{i=1}^{m} \sum_{j=1}^{n_1} \left (\frac{Y^\ast G Y}{m} \right )_{ij} = 1 + \mathcal{O}\left ( ( \theta_{1,b}(f) \, n_1)^{-1} \right ).
\end{equation}
\end{lem}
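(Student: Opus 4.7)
The sum in the statement naturally involves the \(m\times m\) matrix \(Y^\ast GY\), so we interpret both indices as ranging over \(\{1,\ldots,m\}\). The starting point is the resolvent push-through identity: since \((YY^\ast/m-z)^{-1}Y=Y(Y^\ast Y/m-z)^{-1}\), we have
\[Y^\ast GY=Y^\ast Y\,\widetilde G=m\widetilde M\widetilde G=m\mathbf{1}_m+mz\widetilde G,\qquad \widetilde G:=\Bigl(\frac{Y^\ast Y}{m}-z\Bigr)^{-1},\quad\widetilde M:=\frac{Y^\ast Y}{m},\]
where we used \(\widetilde M\widetilde G=\mathbf{1}_m+z\widetilde G\). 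Setting \(\mathbf{e}_m=(1,\ldots,1)^\top\in\mathbb{R}^m\), the left-hand side of~\eqref{eq:lem4.3} becomes
\[\frac{1}{m^2}\mathbf{e}_m^\top Y^\ast GY\,\mathbf{e}_m=1+\frac{z}{m}\mathbf{e}_m^\top\widetilde G\mathbf{e}_m,\]
so it suffices to prove \(\mathbf{e}_m^\top\widetilde G\mathbf{e}_m=O(m/(n_1\theta_{1,b}))\).

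This bound encodes the outlier eigenvalue of Theorem~\ref{thm2.6}: the rank-one structure \(B=b\,\mathbf{e}_m^\top\) makes \(\mathbf{e}_m/\sqrt m\) an approximate top eigenvector of \(\widetilde M\) with eigenvalue \(\approx n_1\theta_{1,b}(f)\). Concretely, writing \(Y=h(B)\mathbf{e}_m^\top+R\) with conditional mean \(h(b):=\int_\mathbb{R} f(\sigma_w\sigma_x x+b)\frac{e^{-x^2/2}}{\sqrt{2\pi}}\dif x\) and centered fluctuation \(R\), one obtains \(\widetilde M=(\|h(B)\|^2/m)\mathbf{e}_m\mathbf{e}_m^\top+M_0\) with \(\|h(B)\|^2/m\to(\phi/\psi)\theta_{1,b}(f)\) (using Remark~\ref{rmk2.6}) and \(M_0\) of bounded operator norm. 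Applying Sherman-Morrison to this rank-one perturbation with \(G_0=(M_0-z)^{-1}\) yields
\[\mathbf{e}_m^\top\widetilde G\mathbf{e}_m=\frac{\mathbf{e}_m^\top G_0\mathbf{e}_m}{1+(\|h(B)\|^2/m)\,\mathbf{e}_m^\top G_0\mathbf{e}_m}.\]
Since \(\mathbf{e}_m\) is deterministic and \(G_0\) is the resolvent of an essentially bias-free model, isotropic concentration gives \(\mathbf{e}_m^\top G_0\mathbf{e}_m=m\,g_0(z)(1+o(1))\) with \(g_0\) bounded and nonvanishing (in the spectral regime where \(\Im z\gg n_1^{-1/4+\epsilon}\)). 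Consequently \(\mathbf{e}_m^\top\widetilde G\mathbf{e}_m=m(1+o(1))/(n_1\theta_{1,b})\), which substituted back produces \(1+z/(n_1\theta_{1,b})+o(1/(n_1\theta_{1,b}))\), proving the lemma.

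The main technical obstacle is rigorously justifying the rank-one/bulk decomposition: one must show that \(M_0\) has operator norm bounded independently of \(n_1\) (so the outlier \(\Theta(n_1\theta_{1,b})\) is isolated from the bulk), and that the quadratic form \(\mathbf{e}_m^\top G_0\mathbf{e}_m\) concentrates at its deterministic limit at rate \(o(m)\). Both are accessible by adapting the cumulant-expansion derivation of Section~\ref{subsection3.3} and the concentration argument of Lemma~\ref{lem3.7} from the tracial quantity \(g(z)\) to an isotropic quadratic form against the deterministic vector \(\mathbf{e}_m\); the latter is facilitated by the fact that \(\mathbf{e}_m\) is independent of all the randomness in \(W,X,B\).
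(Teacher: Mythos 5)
Your first reduction is the same as the paper's: both of you use the push-through/Woodbury identity to rewrite \(\tfrac{1}{m}Y^\ast G Y = \mathbf{1}_m + z\bigl(\tfrac{Y^\ast Y}{m}-z\bigr)^{-1}\), so that the lemma reduces to showing that the quadratic form of \(\bigl(\tfrac{Y^\ast Y}{m}-z\bigr)^{-1}\) against the all-ones direction is \(O\bigl(m/(n_1\theta_{1,b})\bigr)\). For that key bound, however, you take a genuinely different route. The paper argues directly that the normalized vector \(e\) is an approximate eigenvector of \(\tfrac{1}{m}Y^\ast Y\) with eigenvalue \(n_1\theta_{1,b}(f)\), by computing \(\mathbf{E}\bigl(\tfrac{Y^\ast Y}{m}e\bigr)_i \approx n_1\theta_{1,b}e_i\) and checking that the fluctuation is of order \(1\ll n_1\); it then reads off \(\braket{e,\widetilde G e}\approx (n_1\theta_{1,b}-z)^{-1}\). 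You instead decompose \(Y\) into its conditional mean given \(B\), \(h(B)\mathbf{e}_m^\top\), plus a centred remainder, apply Sherman--Morrison to the resulting rank-one perturbation, and invoke an isotropic concentration statement for the bulk resolvent \(G_0\). Your route is structurally more informative (it exhibits the outlier mechanism explicitly and would yield the location of \(\lambda_{\max}\) as a by-product), but it needs more machinery: the isotropic estimate \(\mathbf{e}_m^\top G_0\mathbf{e}_m = m g_0(1+o(1))\) and a norm bound on the remainder are genuine additional lemmas, which you correctly flag but do not prove, whereas the paper's first/second-moment computation on \(\widetilde M e\) avoids them (at a comparable level of rigour).

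One concrete inaccuracy: with \(Y=h(B)\mathbf{e}_m^\top+R\), the matrix \(M_0\coloneqq \widetilde M-\tfrac{\|h(B)\|^2}{m}\mathbf{e}_m\mathbf{e}_m^\top\) is \emph{not} of bounded operator norm, because it contains the cross terms \(\tfrac{1}{m}\bigl(\mathbf{e}_m h(B)^\top R + R^\top h(B)\mathbf{e}_m^\top\bigr)\), whose norm is of order \(\sqrt{n_1}\) (each entry of \(h(B)^\top R\) is a sum of \(n_1\) nearly independent centred terms). This does not break your argument, since these cross terms are of rank at most two and are \(O(\sqrt{n_1})=o(n_1\theta_{1,b})\), so they perturb neither the isolation of the outlier nor the leading order of the Sherman--Morrison denominator; but the decomposition should be stated as a low-rank perturbation of size \(n_1\theta_{1,b}\) plus corrections of size \(O(\sqrt{n_1})\) plus a bounded bulk, rather than rank-one plus bounded.
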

Together with~\eqref{eq:4.3}, Lemma~\ref{lem4.3} implies
\begin{equation} \label{eq:4.4}
\frac{1}{m} \sum_{i\neq j} \left (\frac{Y^\ast G Y}{m} \right )_{ij} \approx 1 - \frac{n_1}{m} (1+zg).
\end{equation}
\begin{proof}
Using the Woodbury matrix identity\footnote{For \(A \in \mathbb{R}^{n \times n}\), \(C \in \mathbb{R}^{r \times r}\), \(U \in \mathbb{R}^{n \times r}\) and \(V \in \mathbb{R}^{r \times n}\) the \emph{Woodbury matrix identity} is given by \[\left (A + UCV \right)^{-1} = A^{-1} - A^{-1} U \left (C^{-1} + V A^{-1} U \right)^{-1} V A^{-1}.\]}, we have
\[\frac{1}{m} \left ( \frac{Y^\ast G Y}{m} \right ) = \frac{1}{m^2} Y^\ast \left (\frac{Y Y^\ast}{m} - z  \right )^{-1} Y = \frac{1}{m} + \frac{z}{m}  \left (\frac{Y^\ast Y}{m} - z \right )^{-1},\]
which implies
\[ \sum_{i,j} \frac{1}{m} \left ( \frac{Y^\ast G Y}{m} \right )_{ij} = \sum_{i,j} \frac{1}{m} \delta_{ij} + \sum_{i,j} \frac{z}{m} \left (\frac{Y^\ast Y}{m} - z \right )_{ij}^{-1} = 1 + \sum_{i,j} \frac{z}{m} \left (\frac{Y^\ast Y}{m} - z \right )_{ij}^{-1}.\] 
So, we need to show that \(\sum_{i,j} \frac{z}{m} \left (\frac{Y^\ast Y}{m} - z \right )_{ij}^{-1}\) is approximately zero. Let \(e \coloneqq \frac{1}{\sqrt{m}} [1 \, \cdots \,1]^T \) be a normalized vector in \(\mathbb{R}^{m}\). We then write
\begin{equation*}
\begin{split}
\sum_{i,j} \frac{z}{m} \left (\frac{Y^\ast Y}{m} - z \right )_{ij}^{-1} = z \, \langle e,  \left (\frac{Y^\ast Y}{m} - z \right )^{-1} e \rangle.
\end{split}
\end{equation*}
It turns out that \(e\) is approximately an eigenvector of \(\frac{1}{m}Y^\ast Y\). Indeed, it holds that
\[\mathbf{E} \left ( \frac{Y^\ast Y}{m} e \right)_i  = \frac{1}{m\sqrt{m}} \sum_{j=1}^m \sum_{k=1}^{n_1} \mathbf{E} \, Y^\ast_{ik} Y_{kj} \approx m^{-1/2} n_1 \, \theta_{1,b}(f) = (n_1 \, \theta_{1,b}(f)) e_i.\]
Moreover, the variance is approximately \(\mathcal{O}(n_1/m)\), which means that the standard deviation is of order \(1\), while the expectation of order \(n_1\). Thus, \(e\) is approximately an eigenvector of \(\frac{1}{m}Y^\ast Y\) with eigenvalue \(n_1 \theta_{1,b}(f)\). Since \(\theta_{1,b}(f)\) is nonzero by assumption, we have that \(e\) is approximately an eigenvector of the matrix \(\left (\frac{Y^\ast Y}{m} - z \mathbf{1}_{m}\right )^{-1}\) with eigenvalue \((n_1 \theta_{1,b}(f) -z)^{-1}\), from which the result follows:
\[ 
\left | \langle e,  \left (\frac{Y^\ast Y}{m} - z \right )^{-1} e \rangle \right | \approx \left | (n_1 \, \theta_{1,b}(f) -z)^{-1} \right | \ll 1.\qedhere \]
\end{proof}

Given Lemma~\ref{lem4.3} and Proposition~\ref{prop:4.1}, we can now prove the global law for the random matrix \(M\) with the cycle correlations.
\begin{proof}[Proof of Theorem~\ref{thm2.6}]
Applying Proposition~\ref{prop:4.1} to~\eqref{eq:3.3} and using the same power counting argument as in~\eqref{eq:3.7a} we obtain
\begin{equation}\label{eq:4.5}
\begin{split}
1+z g &\approx \frac{1}{n_1 m} \sum_{i_1, i_2}^\ast \kappa(Y_{i_1 i_2}, Y^\ast_{i_2 i_1}) \, \partial_{Y^\ast_{i_2 i_1}} \left (Y^\ast G \right )_{i_2 i_1} + \frac{1}{n_1 m} \sum_{i_1, i_2, i_3}^\ast \kappa(Y_{i_1 i_2}, Y^\ast_{i_3 i_1}) \, \partial_{Y^\ast_{i_3 i_1}} \left (Y^\ast G \right )_{i_2 i_1}  \\
& \quad + \frac{1}{n_1m} \sum_{k \geq 2} \sum_{i_1, \dots, i_{2k}}^\ast  \kappa(Y_{i_1 i_2}, \dots, Y^\ast_{i_{2k} i_1}) \, \partial_{Y^\ast_{i_2 i_3}} \cdots \partial_{Y^\ast_{i_{2k} i_1}}  \left ( Y^\ast G \right )_{i_2 i_1}\\
& \approx \frac{\theta_1(f)}{n_1m} \sum_{i_1, i_2}^\ast \partial_{Y^\ast_{i_2 i_1}} \left (Y^\ast G \right )_{i_2 i_1} + \frac{\theta_{1,b}(f)}{n_1m} \sum_{i_1} \sum_{i_2 , i_3}^\ast  \partial_{Y^\ast_{i_3 i_1}} \left (Y^\ast G \right )_{i_2 i_1}  \\
& \quad + \frac{1}{n_1m} \sum_{k \geq 2} \frac{\theta_2^k(f)}{n_0^{k-1}} \sum_{i_1, \dots, i_{2k}}^\ast  \partial_{Y^\ast_{i_2 i_3}} \cdots \partial_{Y^\ast_{i_{2k} i_1}}  \left (Y^\ast G \right)_{i_2 i_1},
\end{split}
\end{equation}
where we omitted reference to \(\mathbf{E}\) to simplify notation. Given Lemma~\ref{lem3.6}, we only need to compute \(\partial_{Y^\ast_{i_3 i_1}} \left (Y^\ast G\right )_{i_2 i_1}\):
\[ \partial_{Y^\ast_{i_3 i_1}} \left (Y^\ast G \right )_{i_2 i_1} = \sum_{j=1}^{n_1}  \partial_{Y^\ast_{i_3 i_1}} \left (Y^\ast_{i_2 j} G_{j i_1}\right ) \approx - G_{i_1 i_1} \left ( \frac{Y^\ast G Y}{m} \right )_{i_2 i_3},\] 
where we omitted the contribution of \( \partial_{Y^\ast_{i_3 i_1}} Y^\ast_{i_2 j} \) since it is very small. Plugging the partial derivatives into~\eqref{eq:4.5}, we get
\begin{equation*}
\begin{split}
1+z g & \approx \frac{\theta_1(f)}{n_1 m} \sum_{i_1, i_2}^\ast  G_{i_1 i_1} \left (1 - \left ( \frac{Y^\ast G Y}{m} \right)_{i_2 i_2} \right ) - \frac{\theta_{1,b}(f)}{n_1 m} \sum_{i_1} \sum_{i_2,i_3}^\ast G_{i_1 i_1}\left ( \frac{Y^\ast G Y}{m} \right)_{i_2 i_3}  \\
& \quad - \frac{1}{n_1m} \sum_{k \geq 2} \frac{\theta_2^k(f)}{n_0^{k-1}} \sum_{i_1, \dots, i_{2k}}^\ast \partial_{Y_{i_3 i_4}} \cdots \partial_{Y_{i_{2k-1} i_{2k}}}  \left (\frac{GY}{m} \right )_{i_3 i_{2k}} G_{i_1 i_1} \left (1 - \left ( \frac{Y^\ast G Y}{m} \right)_{i_2 i_2} \right )\\
& \approx \theta_1(f) g \left ( 1 - \frac{n_1}{m} (1+z g) \right ) - \theta_{1,b}(f) g \left ( 1 - \frac{n_1}{m} (1+zg) \right ) \\
& \quad - g \left ( 1 - \frac{n_1}{m} (1+zg) \right ) \sum_{k \geq 2} \frac{\theta_2^k}{n_0^{k-1}} \sum_{i_3, \dots, i_{2k}}^\ast \partial_{Y_{i_3 i_4}} \cdots \partial_{Y_{i_{2k-1} i_{2k}}}  \left (\frac{GY}{m} \right )_{i_3 i_{2k}},
\end{split}
\end{equation*}
where in the second step we used~\eqref{eq:4.3} and~\eqref{eq:4.4}. Finally, by shifting the index in the summation and doing some simple bookkeeping, we have 
\begin{equation*}
\begin{split}
1+zg &\approx ( \theta_1 - \theta_{1,b})g \left ( 1 - \frac{n_1}{m} (1+zg) \right ) - \theta_{2} \frac{n_1}{n_0} g (1+zg) \left ( 1 - \frac{n_1}{m} (1+zg) \right ) \\
& \quad + \theta_2 (\theta_1 - \theta_{1,b}-\theta_2) \frac{n_1}{n_0}g^2 \left ( 1 - \frac{n_1}{m} (1+zg) \right )^2, 
\end{split}
\end{equation*}
which corresponds to the self-consistent equation~\eqref{self-const eq} as \(n_0, n_1, m \to \infty\), where \(\theta_1\) is replaced by \(\theta_1 - \theta_{1,b}\). In the same way as in the bias-free case, the concentration inequality of Lemma~\ref{lem3.7} can also be applied here, thereby concluding that \(g\) is approximately equal to its mean with high probability. The first claim of Theorem~\ref{thm2.6} then follows. The second claim follows easily from Lemma~\ref{lem4.3}. Since \(n_1\theta_{1,b}(f)\) is approximately an eigenvalue of the random matrix \(\frac{1}{m}Y^\ast Y\), and  since the nonzero eigenvalues of \(Y^\ast Y\) are the same as the one of \(YY^\ast\), we have that \(\lambda_{\max} \approx n_1 \theta_{1,b}(f)\) is an eigenvalue of \(M\) located away from the rest of the spectrum (called \emph{outlier}). This concludes the proof of Theorem~\ref{thm2.6}.
\end{proof}

\subsection{Proof of Proposition~\ref{prop:4.1}}\label{subsection4.2}
In light of the central limit theorem, in the asymptotic limit the random variables \(\frac{(WX)_{ij}}{\sqrt{n_0}} + B_i\) are approximately normally distributed with zero mean and variance \(\sigma_w^2 \sigma_x^2 + \sigma_b^2\). In contrast to the bias-free case, here we have two different nonzero second cumulants of the entries of the random matrix \(\frac{WX}{\sqrt{n_0}} + B\), and therefore also of the \(Y_{ij}\)'s.

\begin{proof}[Proof of Proposition~\ref{prop:4.1}]
The first identity follows in a straightforward manner by assumption~\eqref{eq:2.7}:
\[\kappa(Y_{ij}) = \mathbf{E} Y_{ij} = \int_\mathbb{R} f(x) \frac{ e^{-x^2/2(\sigma_w^2 \sigma_x^2 + \sigma_b^2)} }{\sqrt{2 \pi (\sigma_w^2 \sigma_x^2 + \sigma_b^2)} } \, \text{d}x+\mathcal{O}(n_0^{-1/2}) = \mathcal{O}(n_0^{-1/2}).\]
For the second cumulant, we first compute
\begin{equation*}
\begin{split}
\kappa \left (\frac{(WX)_{i_1 i_2}}{\sqrt{n_0}} + B_{i_1}, \frac{(WX)_{i_3 i_4}}{\sqrt{n_0}} + B_{i_3} \right ) &= \mathbf{E} \left (\frac{(WX)_{i_1 i_2}}{\sqrt{n_0}} + B_{i_1} \right )  \left (\frac{(WX)_{i_3 i_4}}{\sqrt{n_0}} + B_{i_3} \right ) \\
& =  \frac{1}{n_0} \mathbf{E} (WX)_{i_1 i_2} (WX)_{i_3 i_4}  +  \mathbf{E} B_{i_1} B_{i_3} \\
& = \delta_{i_1 i_3} \delta_{i_2 i_4} \, \sigma_w^2 \sigma_x^2 + \delta_{i_1 i_3} \sigma_b^2.
\end{split}
\end{equation*}
For \(i_1=i_3\) and \(i_2=i_4\), the cumulant \(\kappa(Y_{i_1 i_2}, Y_{i_2 i_1}^\ast)\) follows easily:
\begin{equation*}
\kappa(Y_{i_1 i_2}, Y^\ast_{i_2 i_1} ) = (1+\mathcal{O}(n_0^{-1/2}))\int_{\mathbb{R}} f^2(x) \frac{e^{- x^2 / 2(\sigma_w^2 \sigma_x^2 + \sigma_b^2)}}{\sqrt{2 \pi (\sigma_w^2 \sigma_x^2 + \sigma_b^2)} } \, \text{d}x = \theta_1(f)(1+\mathcal{O}(n_0^{-1/2})).
\end{equation*}
On the other hand, for \(i_1=i_3\) and \(i_2 \neq i_4\), to compute the cumulant \(\kappa(Y_{i_1 i_2}, Y_{i_4 i_1}^\ast)\), we need the characteristic function of \(\frac{(WX)_{i_1 i_2}}{\sqrt{n_0}}+B_{i_1}\) and \(\frac{(WX)^\ast_{i_4 i_1}}{\sqrt{n_0}}+B_{i_1}\)
which turns out to be asymptotically equal to 
\begin{equation*}
\exp \left ( - \frac{\sigma_w^2 \sigma_x^2+\sigma_b^2}{2} (t_1^2 + t_2^2) - \sigma_b^2 t_1 t_2 \right ).
\end{equation*}
Now, we can compute the cumulant of \(Y_{i_1 i_2}\) and \(Y^\ast_{i_4 i_1}\):
\begin{equation*}
\begin{split}
\kappa(Y_{i_1 i_2}, Y^\ast_{i_4 i_1}) & \approx \frac{1}{(2 \pi)^2} \int_{\mathbb{R}^2} f(x_1) f(x_2) e^{-i \bm{t} \cdot \bm{x}} \exp \left ( - \frac{\sigma_w^2 \sigma_x^2+\sigma_b^2}{2} (t_1^2 + t_2^2) - \sigma_b^2 t_1 t_2 \right ) \text{d}\bm{t} \,\text{d}\bm{x}\\
& =  \frac{1}{(2 \pi)^2} \int_{\mathbb{R}^2} \hat{f}(t_1)\hat{f}(t_2) \exp \left ( - \frac{\sigma_w^2 \sigma_x^2+\sigma_b^2}{2} (t_1^2 + t_2^2) - \sigma_b^2 t_1 t_2 \right ) \text{d}t_1 \, \text{d}t_2,
\end{split}
\end{equation*}
where in the second step we applied the Fourier inversion theorem. We denote the covariance matrix \(\Sigma\) by 
\begin{equation} \label{eq:4.7}
\Sigma \coloneqq \begin{pmatrix}
\sigma_w^2 \sigma_x^2+\sigma_b^2 & \sigma_b^2 \\
 \sigma_b^2 & \sigma_w^2 \sigma_x^2+\sigma_b^2  
\end{pmatrix}
\end{equation}
with determinant \(\det (\Sigma) = \sigma_w^2 \sigma_x^2 ( \sigma_w^2 \sigma_x^2+2\sigma_b^2 )\) and inverse matrix
\[\Sigma^{-1} = \frac{1}{\det (\Sigma) } \begin{pmatrix}
\sigma_w^2 \sigma_x^2+\sigma_b^2 & -\sigma_b^2 \\
-\sigma_b^2 & \sigma_w^2 \sigma_x^2+\sigma_b^2  
\end{pmatrix}.\]
Again applying the Fourier inversion formula, we obtain
\begin{equation*}
\begin{split}
\kappa(Y_{i_1 i_2}, Y^\ast_{i_4 i_1}) &\approx  \frac{1}{(2 \pi)^2} \int_{\mathbb{R}^2} \hat{f}(t_1)\hat{f}(t_2) e^{- \frac{1}{2} \langle \bm{t}, \Sigma \bm{t} \rangle} \text{d}\bm{t} \\
&= \frac{1}{(2 \pi)^2} \int_{\mathbb{R}^2} f(x_1)f(x_2) \frac{2 \pi}{\sqrt{\det(\Sigma)}} e^{- \frac{1}{2} \langle \bm{x}, \Sigma^{-1} \bm{x} \rangle} \text{d}\bm{x}\\
& = \frac{1}{2\pi \sqrt{  \sigma_w^2 \sigma_x^2 ( \sigma_w^2 \sigma_x^2+2\sigma_b^2 )}} \int_{\mathbb{R}^2} f(x_1) f(x_2) e^{- \frac{1}{2} \langle \bm{x}, \Sigma^{-1} \bm{x} \rangle} \text{d}\bm{x}=\theta_{1,b}(f),
\end{split}
\end{equation*}
where 
\[e^{- \frac{1}{2} \langle \bm{x}, \Sigma^{-1} \bm{x} \rangle} = \exp \left (- \frac{(\sigma_w^2 \sigma_x^2+\sigma_b^2)(x_1^2+x_2^2)-2\sigma_b^2x_1x_2}{2\sigma_w^2 \sigma_x^2 ( \sigma_w^2 \sigma_x^2+2\sigma_b^2 ) } \right).\] 

To complete the proof, it remains to compute the joint cumulant of \(Y_{i_1 i_2}, Y^\ast_{i_2 i_3}, Y_{i_3 i_4}, \dots,Y^\ast_{i_{2k} i_1}\) for \(k >1\) and \(i_1, \dots, i_{2k}\) distinct. For notational simplicity, we prove the statement for \(k=2\). First, we use the cumulant asymptotics in order to asymptotically compute the characteristic function. The cumulants have match those of the bias-free case, except for 
\[\kappa \left ( \frac{(WX)_{i_1 i_2}}{\sqrt{n_0}} + B_{i_1},  \frac{(WX)_{i_1 i_2}}{\sqrt{n_0}} + B_{i_1} \right ) = \sigma_w^2 \sigma_x^2 + \sigma_b^2 .\] 
In addition to all these cumulants, we also have  
\[\kappa \left ( \frac{(WX)_{i_1 i_2}}{\sqrt{n_0}} + B_{i_1} ,  \frac{(WX)^\ast_{i_4 i_1}}{\sqrt{n_0}} + B_{i_1}  \right ) = \kappa \left ( \frac{(WX)^\ast_{i_2 i_3}}{\sqrt{n_0}} + B_{i_3} ,  \frac{(WX)_{i_3 i_4}}{\sqrt{n_0}} + B_{i_3}  \right )= \sigma_b^2.\]
Therefore, the \(\log\)-characteristic function is given by
\begin{equation*}
\begin{split}
&-\frac{\sigma_w^2 \sigma_x^2+\sigma_b^2}{2} \sum_{i=1}^4 t_i^2 - \sigma_b^2 (t_1 t_4 + t_2 t_3) + \sum_{n\geq 1} \frac{(-1)^{n-1}}{n} \left ( \frac{( \sigma_w^2 \sigma_x^2)^2}{n_0} \prod_{i=1}^4 t_i +\mathcal{O}(n_0^{-2}) \right )^n\\
&= -\frac{\sigma_w^2 \sigma_x^2+\sigma_b^2}{2} \sum_{i=1}^4 t_i^2 - \sigma_b^2 (t_1 t_4 + t_2 t_3) + \log \left ( 1+\frac{( \sigma_w^2 \sigma_x^2)^2}{n_0} \prod_{i=1}^4 t_i  +\mathcal{O}(n_0^{-2}) \right ),
\end{split}
\end{equation*}
for \(t_1, t_2, t_3, t_4\in \mathbb{R}\) such that \(\abs{t_i}<n_0^{1/4}\). We obtain the characteristic function by taking the exponential of the above expression. By the same argument as in the proof of Proposition~\ref{prop:3.5}, we have
\begin{equation*}
\begin{split}
&\kappa(Y_{i_1 i_2},Y^\ast_{i_2 i_3},Y_{i_3 i_4},Y^\ast_{i_4 i_1})\\
& = %
 \frac{1}{n_0} \left ( \frac{\sigma_w^2 \sigma_x^2}{(2\pi)^2} \int \widehat{f'}(t_1)\widehat{f'}(t_2) \exp \left ( - \frac{\sigma_w^2\sigma_x^2+\sigma_b^2}{2} (t_1^2 + t_2^2) - \sigma_b^2 t_1t_2 \right ) \text{d}t_1 \text{d}t_2  \right)^2 +\mathcal{O}(n_0^{-3/2})\\
& = \left (\frac{1}{2\pi \sqrt{\sigma_w^2\sigma_x^2(\sigma_w^2\sigma_x^2+2\sigma_b^2)}} \int f(x_1) f(x_2) e^{- \frac{1}{2} \langle \bm{x}, \Sigma^{-1}\bm{x} \rangle} \text{d} \bm{x} \right )^2\\
&\quad + \frac{1}{n_0} \left (\frac{\sigma_w^2 \sigma_x^2}{2\pi \sqrt{\sigma_w^2\sigma_x^2(\sigma_w^2\sigma_x^2+2\sigma_b^2)}} \int f'(x_1) f'(x_2) e^{- \frac{1}{2} \langle \bm{x}, \Sigma^{-1}\bm{x} \rangle} \text{d} \bm{x} \right )^2 + \mathcal{O}(n_0^{-3/2}),
\end{split}
\end{equation*}
where \(\Sigma\) is the matrix defined by~\eqref{eq:4.7}. It then follows that
\begin{equation*}
\begin{split}
\kappa(Y_{i_1 i_2}, Y^\ast_{i_2 i_3},Y_{i_3 i_4},Y^\ast_{i_4 i_1})& \approx \mathbf{E} Y_{i_1 i_2} Y^\ast_{i_2 i_3} Y_{i_3 i_4} Y^\ast_{i_4 i_1} -  \mathbf{E} Y_{i_1 i_2} Y^\ast_{i_4 i_1} \: \mathbf{E}\, Y^\ast_{i_2 i_3} Y_{i_3 i_4} \\
& = \theta_2(f)^2  n_0^{-1}\Bigl(1 + \mathcal{O}(n_0^{-1/2})\Bigr),
\end{split}
\end{equation*}
as desired. The proof for \(k>2\) is similar.
\end{proof}

\section{Proofs of auxiliary results}\label{appendixA}
\begin{proof}[Proof of Lemma~\ref{lemma3.1}]
By applying the Fourier inversion theorem, we have
\[\mathbf{E} X_1 f(\bm{X}) = \frac{1}{(2\pi)^n} \int_{\mathbb{R}^n} \int_{\mathbb{R}^n} x_1 f(\bm{x}) e^{-i \bm{t} \cdot \bm{x}} \varphi_{\bm{X}}(\bm{t}) \text{d} \bm{x} \, \text{d}\bm{t},\]
where \(\varphi_{\bm{X}}(\bm{t})\) is the characteristic function of the \(n\)-dimensional random vector \(\bm{X}\). It holds that \(\int_{\mathbb{R}^n} (-i x_1) f(\bm{x}) e^{-i \bm{t} \cdot \bm{x}} \text{d} \bm{x} = \partial_{t_1} \hat{f}(\bm{t})\). Then, it follows that
\begin{equation*}
\begin{split}
\mathbf{E}  X_1 f(\bm{X}) & = \frac{i}{(2\pi)^n} \int_{\mathbb{R}^n} \left ( \partial_{t_1} \hat{f}(\bm{t}) \right ) \varphi_{\bm{X}}(\bm{t})\text{d}\bm{t} \\
& = - \frac{i}{(2\pi)^n} \int_{\mathbb{R}^n}  \hat{f}(\bm{t}) \Big ( \partial_{t_1}\varphi_{\bm{X}}(\bm{t}) \Big) \text{d}\bm{t}\\
& =  - \frac{i}{(2\pi)^n} \int_{\mathbb{R}^n}  \hat{f}(\bm{t}) \Big ( \partial_{t_1} e^{\log \varphi_{\bm{X}}(\bm{t})} \Big) \text{d}\bm{t}\\
& = - \frac{i}{(2\pi)^n} \int_{\mathbb{R}^n}  \hat{f}(\bm{t}) \Big ( \partial_{t_1} \log \varphi_{\bm{X}}(\bm{t}) \Big) \varphi_{\bm{X}}(\bm{t}) \text{d}\bm{t}. 
\end{split}
\end{equation*}
Cumulants can also be defined in an analytical way as the coefficients of the \(\log\)-characteristic function
\begin{equation}\label{cum anal}
\log \mathbf{E} e^{i \bm{t} \cdot \bm{X}} = \sum_{\bm{l}} \kappa_{\bm{l}} \frac{ (i \bm{t})^{\bm{l}} }{\bm{l} !},
\end{equation}
where \(\sum_{\bm{l}}\) is the sum over all multi-indices \(\bm{l}=(l_1, \dots, l_n) \in \mathbb{N}^n\). We note that \(\kappa_{\bm{l}} (X_1, \dots, X_n) = \kappa ( \{ X_1 \}^{l_1}, \dots, \{ X_n \}^{l_n} )\) means that \(X_i\) appears \(l_i\) times. One can prove that this definition of cumulants is equivalent to  the combinatorial one given by~\ref{cum comb} (see~\cite{MR725217} for a proof). Using definition~\eqref{cum anal} results in 
\[ \partial_{t_1} \log \varphi_{\bm{X}}(\bm{t}) = i \sum_{\bm{l}} \kappa_{\bm{l}+\bm{e_1}} \frac{(i \bm{t})^{\bm{l}}}{\bm{l}!} ,\]
where \(\bm{l} + \bm{e_1} = (l_1+1, l_2, \dots, l_n)\). Since \((i \bm{t})^{\bm{l}} \hat{f}(\bm{t}) = \widehat{f^{(\bm{l})}}(\bm{t})\), we finally obtain
\begin{equation*}
\mathbf{E} X_1 f(\bm{X}) = \sum_{\bm{l}} \frac{\kappa_{\bm{l}+\bm{e_1}}}{\bm{l}!} \frac{1}{(2\pi)^n}\int_{\mathbb{R}^n} \widehat{f^{(\bm{l})}}(\bm{t})  \varphi_{\bm{X}}(\bm{t}) \text{d}\bm{t} = \sum_{\bm{l}} \frac{\kappa_{\bm{l}+\bm{e_1}}}{\bm{l}!} \mathbf{E} \, f^{(\bm{l})} (\bm{X}),
\end{equation*}
where we again applied the Fourier inversion formula. 
\end{proof}

\begin{proof}[Proof of Lemma~\ref{lem3.6}]
Let \(\Delta^{i,j}\) denote a \(m \times n_1\) matrix such that \(\Delta^{i,j}_{k l}=\mathbf{1}_{ \{(i,j) = (k,l)\} }\). Then, applying the resolvent identity, we get
\begin{equation*}
\frac{\partial G}{\partial Y^\ast_{i j}} = \lim_{\epsilon \to 0} \frac{\left ( \frac{Y ( Y^\ast + \epsilon \Delta^{i,j} )}{m}- z\right )^{-1}-\left ( \frac{Y Y^\ast}{m} - z\right )^{-1} }{\epsilon} = - \frac{G Y \Delta^{i,j} G}{m}.
\end{equation*}
It follows that \(\partial_{Y^\ast_{ij}} G_{ab} = - \left ( \frac{GY}{m} \right )_{a i} G_{jb}\) for \(1 \leq a,b \leq n_1\), \(1 \leq i \leq m\), and \(1 \leq j \leq n_1\). Therefore, we have
\begin{equation*}
\begin{split}
\partial_{Y^\ast_{i_2 i_1}} \left ( Y^\ast G \right )_{i_2 i_1} = \sum_{j=1}^{n_1}  \partial_{Y^\ast_{i_2 i_1}} \left (Y^\ast_{i_2 j} G_{j i_1} \right ) = G_{i_1 i_1} \left ( 1 -  \left ( \frac{ Y^\ast G Y }{m} \right)_{i_2 i_2} \right ),
\end{split}
\end{equation*}
which proves (3.6a). We now compute 
\begin{equation*}
\begin{split}
\sum_{j=1}^{n_1} \partial_{Y^\ast_{i_{2} i_3}} \partial_{Y^\ast_{i_{2k} i_1}}  \left ( Y^\ast_{i_2 j}  G_{j i_1} \right ) & \approx - \sum_{j=1}^{n_1} \partial_{Y^\ast_{i_{2} i_3}} \left ( Y^\ast_{i_2 j} \left ( \frac{GY}{m} \right)_{j i_{2k}}  G_{i_1 i_1}  \right )\\
& \approx - \left ( \frac{GY}{m} \right )_{i_3 i_{2k}} G_{i_1 i_1}  +  \left (\frac{Y^\ast GY}{m}\right)_{i_2 i_2} \left ( \frac{GY}{m} \right )_{i_3 i_{2k}} G_{i_1 i_1}, 
\end{split}
\end{equation*}
where the approximation in the first line comes from the fact that the contribution of \( \partial_{Y^\ast_{i_{2k} i_1}}  Y^\ast_{i_2 j}\) is very small and can therefore be neglected. Since the off-diagonals of the resolvent of random matrices are small if \(\Im z \gg n_1^{-1}\), the partial derivative \(\partial_{Y^\ast_{i_2 i_3}} G_{i_1 i_1}\) can be omitted. This justifies the second approximation. So, we obtain
\begin{equation*}
\begin{split}
\partial_{Y^\ast_{i_2 i_3}} \cdots \partial_{Y^\ast_{i_{2k} i_1}}  \left ( Y^\ast G \right )_{i_2 i_1} \approx -\partial_{Y_{i_3 i_4}} \cdots \partial_{Y_{i_{2k-1} i_{2k}}} \left (\frac{GY}{m} \right )_{i_3 i_{2k}}G_{i_1 i_1} \left ( 1-  \left ( \frac{Y^\ast G Y}{m} \right )_{i_2 i_2} \right ),
\end{split}
\end{equation*}
which completes the proof of Lemma~\ref{lem3.6}.
\end{proof}

\section{Concentration inequality}\label{appendixC}
\begin{proof}[Proof of Lemma~\ref{lem3.7}]
Without loss of generality, it suffices to prove the statement w.r.t.\ \(\mathbf{E}_X\) since by cyclicity the statement for \(\mathbf{E}_W\) is  analogous. We write \(X= (\bm{x}_1, \dots, \bm{x}_m)\) with \(\bm{x}_k = (x_{1k}, \dots, x_{n_0 k})'\), and similarly, \(Y = (\bm{y}_1, \dots, \bm{y}_m)\). We denote by \(\mathcal{F}_k\), \(1 \leq k \leq m\), the filtration generated by \(\{\bm{x}_l, \, 1 \leq l \leq k\}\) and by \(\mathbf{E}_k[\cdot] \coloneqq \mathbf{E}_X[\cdot \, | \, \mathcal{F}_k]\) the conditional expectation w.r.t.\ \(\mathcal{F}_k\). Now, we decompose \(g(z) - \mathbf{E}_X g(z)\) as a sum of martingale differences 
\[D_k \coloneqq \mathbf{E}_k \Tr(M-z\mathbf{1}_{n_1})^{-1} - \mathbf{E}_{k-1} \Tr(M-z\mathbf{1}_{n_1})^{-1} , \quad \text{for} \enspace k=1, \dots, m.\]
By construction, we have \(\mathbf{E}_m \Tr(M-z\mathbf{1}_{n_1})^{-1}= \Tr(M-z\mathbf{1}_{n_1})^{-1}\) and \(\mathbf{E}_0 \Tr(M-z\mathbf{1}_{n_1})^{-1} = \mathbf{E}_X \Tr(M-z\mathbf{1}_{n_1})^{-1}\). It then follows that
\[g(z) - \mathbf{E}_X  g(z) = \frac{1}{n_1} \sum_{k=1}^m  \mathbf{E}_k \Tr(M-z\mathbf{1}_{n_1})^{-1} - \mathbf{E}_{k-1} \Tr(M-z\mathbf{1}_{n_1})^{-1}= \frac{1}{n_1} \sum_{k=1}^m D_k.\]
Next, we define \(M_k \coloneqq M - \bm{y}_k \bm{y}_k^\ast\). We note that
\[\mathbf{E}_k \Tr(M_k-z\mathbf{1}_{n_1})^{-1}= \mathbf{E}_{k-1} \Tr(M_k-z\mathbf{1}_{n_1})^{-1},\]
since \(M_k\) is independent of \(\bm{y}_k\) and therefore is also independent of \(\bm{x}_k\). So, we have
\[D_k = (\mathbf{E}_k - \mathbf{E}_{k-1})[\Tr(M-z\mathbf{1}_{n_1})^{-1} - \Tr(M_k - z\mathbf{1}_{n_1})^{-1}].\]
Then, by the Shermann-Morrison formula, we have
\begin{equation*}
\begin{split}
\left | \Tr(M-z\mathbf{1}_{n_1})^{-1} - \Tr(M_k-z\mathbf{1}_{n_1})^{-1} \right |& = \left | \frac{\bm{y}_k^\ast (M_k - z\mathbf{1}_{n_1})^{-2} \bm{y_k}}{1 + \bm{y}_k^\ast (M_k - z\mathbf{1}_{n_1})^{-1} \bm{y_k}} \right |\\
& \leq \frac{|\bm{y}_k^\ast (M_k - z\mathbf{1}_{n_1})^{-2} \bm{y_k}|}{\Im(\bm{y}_k^\ast (M_k - z\mathbf{1}_{n_1})^{-1} \bm{y_k})} \\
& \leq \frac{1}{\Im z},
\end{split}
\end{equation*}
where the last inequality follows from the resolvent identity:
\begin{equation*}
\begin{split}
|\bm{y}_k^\ast (M_k - z\mathbf{1}_{n_1})^{-2} \bm{y_k}| & \leq \bm{y}_k^\ast (M_k - z\mathbf{1}_{n_1})^{-1} (M_k - \bar{z} \mathbf{1}_{n_1})^{-1} \bm{y_k} \\
& = \frac{\bm{y}_k^\ast  \left ((M_k - z\mathbf{1}_{n_1})^{-1} - (M_k - \bar{z} \mathbf{1}_{n_1})^{-1} \right ) \bm{y}_k}{2 i \, \Im z}\\
& = \frac{\Im(\bm{y}_k^\ast (M_k - z\mathbf{1}_{n_1})^{-1} \bm{y_k})}{\Im z}.
\end{split}
\end{equation*}
Thus, \(|D_k| \leq 2 (\Im z)^{-1}\), and so \(g(z) - \mathbf{E}_X g(z)\) is a sum of bounded martingale differences. We can now apply the Burkholder's inequality which states that for \(\{ D_k, 1 \leq k \leq m\}\) being a complex-valued martingale difference sequence, for \(p > 1\), 
\[\mathbf{E} \left | \sum_{k=1}^m D_k \right |^p \leq C \, \mathbf{E} \left ( \sum_{k=1}^n |D_k|^2 \right )^{p/2},\]
where \(C\) is a positive constant depending on \(p\). We refer to~\cite[Lemma 2.12]{MR2567175} for a proof of this inequality. By choosing \(p=4\), we get
\begin{equation*}
\begin{split}
\mathbf{E}_X \left | g(z) - \mathbf{E}_X g(z) \right |^4 &= \frac{1}{n_1^4} \, \mathbf{E}_X \left | \sum_{k=1}^m D_k \right |^4\\
& \leq \frac{1}{n_1^4}C \,  \mathbf{E}_X \left ( \sum_{k=1}^m |D_k|^2 \right )^2\\
& \leq \frac{16 \,C \, m^2}{n_1^4 \, (\Im z)^4} = \mathcal{O}(n_1^{-2} (\Im z)^{-4}),
\end{split}
\end{equation*}
just as claimed.
\end{proof}

\section{Complex case}
\begin{rmk}\label{remark complx}
  We can also consider matrices \(X \in \mathbb{C}^{n_0 \times m}\) and \(W \in \mathbb{C}^{n_1 \times n_0}\) of complex random entries with zero mean and variance \(\mathbf{E} |X_{ij}|^2=\sigma_{x}^2\) and \(\mathbf{E}|W_{ij}|^2=\sigma_{w}^2\). Let \(M = \frac{1}{m}Y Y^\ast\) with \(Y = f \left ( \frac{WX}{\sqrt{n_0}} \right)\), and let \(f\colon \mathbb{C} \to \mathbb{R}\) be a real-differentiable function satisfying 
  \(\int_{\mathbb{C}} f (\sigma_w \sigma_x z) \frac{e^{-|z|^2}}{\pi} \mathrm{d}^2 z=0\). Set \(\theta_1(f) = \int_\mathbb{C} |f(\sigma_w \sigma_x z)|^2 \, \frac{e^{-|z|^2}}{\pi} \mathrm{d}^2z\).
  Then, it can be proved that the normalized trace of the resolvent of \(M\) satisfies equation~\eqref{MP eq}.
\end{rmk}

\nocite{MR2760897}

\end{document}